%
%
%
%
%
%
%
\documentclass{svjour3}  
%
\smartqed  
\usepackage{graphicx}
\usepackage{bbold}
\usepackage{optidef}

\usepackage{amsthm,amsmath}
\usepackage[ruled,vlined]{algorithm2e}
\theoremstyle{definition}
\newtheorem{defn}{Definition}
\theoremstyle{plain}
\newtheorem{thm}{Theorem}
\newtheorem{lem}[thm]{Lemma}

\usepackage[utf8]{inputenc}
\usepackage[T1]{fontenc}
\usepackage{optidef}
\usepackage{hyperref}
\usepackage{caption}
\usepackage{subcaption}
\usepackage{comment}
\SetCommentSty{mycommfont}
\hypersetup{
    colorlinks=true,
    linkcolor=blue,
    filecolor=magenta,      
    urlcolor=black,
}
\newcommand{\argmax}{\mathop{\rm arg~max}\limits}
\newcommand{\argmin}{\mathop{\rm arg~min}\limits}
\DeclareUnicodeCharacter{3000}{} 
\usepackage[numbers]{natbib}
\usepackage{nicefrac} 
\usepackage{mathptmx}   
\usepackage{microtype} 
\usepackage{lipsum}
\usepackage{graphicx}
\usepackage{doi}
\usepackage{multirow}
%
%
%
%
\begin{document}

\title{Moreau-Yoshida Variational Transport: A General Framework For Solving Regularized Distributional Optimization Problems}

\author{Dai Hai Nguyen \and Tetsuya Sakurai}


\institute{Dai Hai Nguyen (Corresponding author) \at
              Department of Computer Science, University of Tsukuba, 1-1-1 Tennodai, Tsukuba, Ibaraki 305-8577, Japan \\
              \email{hai@cs.tsukuba.ac.jp or haidnguyen0909@gmail.com}           
           \and
          Tetsuya Sakurai \at
              Department of Computer Science, University of Tsukuba, 1-1-1 Tennodai, Tsukuba, Ibaraki 305-8577, Japan \\
              \email{sakurai@cs.tsukuba.ac.jp}}    

\date{Received: date / Accepted: date}
\maketitle
\begin{abstract}{We address a general optimization problem involving the minimization of a composite objective functional defined over a class of probability distributions. The objective function consists of two components: one  assumed to have a variational representation, and the other expressed in terms of the expectation operator of a possibly nonsmooth convex regularizer function. Such a regularized distributional optimization problem widely appears in machine learning and statistics, including proximal Monte-Carlo sampling, Bayesian inference, and generative modeling for regularized estimation and generation.

Our proposed method, named \textbf{M}oreau-\textbf{Y}oshida \textbf{V}ariational \textbf{T}ransport (\textbf{MYVT}), introduces a novel approach to tackle this regularized distributional optimization problem. First, as the name suggests, our method utilizes the Moreau-Yoshida envelope to provide a smooth approximation of the nonsmooth function in the objective. Second, we reformulate the approximate problem as a concave-convex saddle point problem by leveraging the variational representation. Subsequently, we develop an efficient primal-dual algorithm to approximate the saddle point. Furthermore, we provide theoretical analyses and present experimental results to showcase the effectiveness of the proposed method.}
\keywords{distributional optimization \and Moreau-Yoshida envelopes}
\end{abstract}

\section{Introduction}

Many tasks in machine learning and computational statistics are posed as distributional optimization problems, where the goal is to optimize a functional $F:\mathcal{P}_{2}(\mathcal{X})\rightarrow \mathbb{R}$ of probability distributions: $\min_{q\in \mathcal{P}_{2}(\mathcal{X})}F(q)$, where $\mathcal{P}_{2}(\mathcal{X})$ denotes the set of probability distributions defined on the domain $\mathcal{X}$ ($\subset \mathbb{R}^{d}$) with finite second-order moment. Examples of this formulation include many well-known problems such as Bayesian inference (e.g. variational autoencoder \cite{kingma2014stochastic}) and synthetic sample generation (e.g. generative adversarial networks \cite{goodfellow2020generative}). Basically, these models aim to approximate a target distribution by generating samples (or also called particles) in a manner that minimizes the dissimilarity between the empirical probability distribution obtained from the samples and the target distribution. The dissimilarity function typically involves measures such as Kullback-Leiber (KL) divergence, Jensen-Shanon (JS) divergence or Wasserstein distance in the optimal transport \cite{villani2021topics}.

In this paper, we consider a general class of distributional optimization problems with a composite objective functional of the following form:
\begin{align}
\label{eqn:regDP}
    \min_{q\in \mathcal{P}_{2}(\mathcal{X})}\left\{ G(q)\coloneqq F(q) + \alpha\mathbb{E}_{\textbf{x}\sim q}\left[g(\textbf{x})\right] \right\},
\end{align}
where $\mathbb{E}$ denotes the expectation operator and $g:\mathbb{R}^{d}\rightarrow \mathbb{R}$ denotes a possibly nonsmooth convex regularizer function for $\textbf{x}\in \mathcal{X}$ and $\alpha$ is a positive constant. Two choices of the function $g$ are the $l_{1}$-norm function $g(\textbf{x})=|\textbf{x}|$, which encourages sparse samples (with many elements equal 0), and one-dimensional total variation semi-norm $g(\textbf{x})=\sum_{i=2}^{d}|\textbf{x}_{i}-\textbf{x}_{i-1}|$, which encourages sparsity of the difference between nearby elements (i.e. local constancy of elements). Solving problem (\ref{eqn:regDP}) could be challenging due to the non-smoothness of the function $g$.

An example of the above formulation appears in the proximal Markov Chain Monte Carlo (MCMC) sampling \cite{durmus2018efficient,pereyra2016proximal}, which exploits convex analysis to obtain samples efficiently from log-concave density of the following form: $\text{exp}(-U(\textbf{x}))$, where $U(\textbf{x})=f(\textbf{x})+g(\textbf{x})$, $f:\mathbb{R}^{d}\rightarrow \mathbb{R}$ is a smooth convex function while $g$ is the nonsmooth convex function. By employing KL divergence to quantify the difference between the empirical probability distribution $q$  obtained from the current samples and the target distribution, the proximal MCMC sampling can be regarded as a specific instance of problem (\ref{eqn:regDP}).

We are particularly focused on devising an efficient algorithm for solving problem (\ref{eqn:regDP}) when the functional $F$ has a variational represenation in the following form:
\begin{align}
\label{eqn:varform}
    F(q) = \sup_{h\in\mathcal{H}}{\left\{ \mathbb{E}_{\textbf{x}\sim q}\left[ h(\textbf{x})\right]-F^{*}(h)\right\}},
\end{align}
where $\mathcal{H}$ is a class of square-integrable functions on $\mathcal{X}$ with respect to the Lebesgue measure and $F^{*}:\mathcal{H}\rightarrow \mathbb{R}$ is a convex conjugate functional of $F$. It is noted that the variational representation of $F$ involves a supremum over a function class $\mathcal{H}$. However, directly maximizing over an unrestricted function class is computationally challenging. To address it, we restrict $\mathcal{H}$ to a subset of the square-integrable functions such that maximization over $\mathcal{H}$ can be numerically solved, such as a class of deep neural networks or an Reproducing kernel Hilbert space (RKHS).
Furthermore, we assume throughout this paper that the maximum of $\mathbb{E}_{\textbf{x}\sim q}\left[ h(\textbf{x})\right]-F^{*}(h)$ over $\mathcal{H}$ exists, enabling the conversion of the supremum problem into a maximization problem.

For instance, when $F$ is the KL divergence, its variational representation is defined as:
\begin{align}
\label{eqn:varformKL}
    KL(q,\pi) = \sup_{h\in\mathcal{H}}{\left\{\mathbb{E}_{\textbf{x}\sim q}\left[ h(\textbf{x})\right]-\log\mathbb{E}_{\textbf{x}\sim \pi} \left[e^{h(\textbf{x})}\right]\right\}},
\end{align}
where $\pi$ denotes the target distribution. The solution to this problem can be estimated using samples from $q$ and $\pi$. In general, directly optimizing $F$ (problem (\ref{eqn:regDP}) with $\alpha=0$) can be achieved through an iterative algorithm called Wasserstein Gradient Descent \cite{santambrogio2015optimal}. This algorithm involves two steps in each iteration: 1) computing Wasserstein gradient of $F$ based on the current probability distribution and 2) performing an exponential mapping on $\mathcal{P}_{2}(\mathcal{X})$, the space of probability distributions. Variational transport (VT) \cite{liu2021infinite} proposes to minimize $F$ by approximating a probability distribution using a set of samples (or \textit{particles}). It leverages the variational representation of $F$ to update particles. Specifically, VT solves problem (\ref{eqn:varform}) by utilizing particles to approximate the Wasserstein gradient of $F$. The obtained solution is then used to push each particle in a specified direction. This process can be considered as a forward discretization of the Wasserstein gradient flow. 

However, when $\mathcal{X}$ is a constrained domain, VT may push the particles outside of the domain when following the direction given by the solution of (\ref{eqn:varform}). To address this issue, MirrorVT \cite{nguyen2023mirror} is introduced. The main idea behind mirrorVT is to map the particles from constrained domain (primal space) to an unconstrained domain (dual space), induced by a mirror map. Then, an approximate Wasserstein gradient descent is performed on the space of probability distributions defined over the dual space to update each particle, similar to VT. At the end of each iteration, the particles are mapped back to the original constrained domain.\\

\noindent
\textbf{Contributions}. We propose a novel method, named \textbf{M}oreau-\textbf{Y}oshida \textbf{V}ariational \textbf{T}ransport (\textbf{MYVT}), to tackle problem (\ref{eqn:regDP}). 
Our method deals with the non-smoothness of $g$ in the regularization term by employing the Moreau-Yoshida envelope \cite{rockafellar2009variational} to obtain a smooth approximation of $g$. By leveraging the variational representation of $F$, we reformulate the original problem as a concave-convex saddle point problem and develop an efficient primal-dual algorithm to approximate the saddle point. In contrast to the particle-based methods \cite{liu2021infinite, nguyen2023mirror}, which represent the probability distribution $q$ with a set of particles, MYVT employs a neural network to represent $q$ through \textit{reparameterization}. The network parameters are trained to optimize the objective of problem (\ref{eqn:regDP}). This approach addresses the issue of limited approximation capacity and the need for significant memory resources to store a large number of particles in particle-based methods. Furthermore, we provide theoretical analyses and present experimental results on synthetic and real-world datasets to showcase the effectiveness of the proposed method. 

\section{Related Works}
Our work is related to the research on methods for sampling distributions with nonsmooth convex composite potentials, which involve the sum of a continuously differentiable function and a possibly nonsmooth function. In particular, \cite{durmus2018efficient,pereyra2016proximal} introduced the proximal MCMC algorithm for quantifying uncertainty in Bayesian imaging application. They focused on the total-variation semi-norm \cite{rudin1992nonlinear} and $l_{1}$-norm \cite{park2008bayesian}, which encourage the generation of parameter estimates with specific structural properties. To handle the non-smoothness of the penalties, they utilized the Moreau-Yoshida envelope to obtain a smooth approximation to the total-variation semi-norm and $l_{1}$-norm. The Langevin algorithm was then employed to generate samples from the smoothed posterior distribution.

Furthermore, our method builds on VT \cite{liu2021infinite}, which leverages the optimal transport framework and variational representation (\ref{eqn:varform}) of $F$ to solve problem (\ref{eqn:regDP}) without regularization via particle approximation. In each iteration, VT estimated the Wasserstein gradient of $F$ by solving a variational maximization problem associated with $F$ and the current particles. It then performed Wasserstein gradient descent by moving particles in a direction specified by the estimated Wasserstein gradient. The advantage of VT was its ability to optimize $F$ beyond commonly targeted KL divergence in MCMC sampling methods.

When $g$ represents the indicator function of a set, problem (\ref{eqn:regDP}) transforms into a constrained distributional optimization problem, where the probability distributions are defined over a constrained domain. Applying VT to this problem may lead to particles moving outside the constrained domain. MirrorVT \cite{nguyen2023mirror} addressed this issue by mapping particles from the constrained domain to the unconstrained one through a mirror map, which is inspired by the Mirror Descent algorithm originally designed for constrained convex optimization \cite{beck2003mirror}. 

\section{Preliminaries}
We first review notions from convex analysis essential for our proposed method, specifically Moreau-Yoshida envelopes and proximal operators. Then, we review some concepts in optimal transport and Wasserstein space, and also state some related properties. We lastly summarize VT.
\subsection{Moreau-Yoshida Envelopes and Proximal Operators}
\begin{defn}
    (Moreau-Yoshida envelope) Given a convex function $g:\mathbb{R}^{d}\rightarrow \mathbb{R}$ and a positive scaling parameter $\lambda$, the Moreau-Yoshida envelope of $g$, denoted by $g^{\lambda}$, is given by:
    \begin{align}
    \label{def:envelope}
    g^{\lambda}(\textbf{x})=\inf_{\textbf{y}\in \mathbb{R}^{d}}\left\{ g(\textbf{y})+\frac{1}{2\lambda}\lVert \textbf{x}-\textbf{y} \rVert^{2}\right\}
    \end{align}.
    
\end{defn}
\noindent
The infimum of (\ref{def:envelope}) is always uniquely attained and the minimizer defines the proximal map of $g$:
\begin{defn}
    \label{def:proxmap}
    (Proximal Map) The proximal map of $g$, denoted by $\texttt{prox}^{\lambda}_{g}$, is given by:
    \begin{align}
    \texttt{prox}^{\lambda}_{g}(\textbf{x})=\argmin_{\textbf{y}\in \mathbb{R}^{d}}\left\{ g(\textbf{y})+\frac{1}{2\lambda}\lVert \textbf{x}-\textbf{y} \rVert^{2}\right\}
    \end{align}.
\end{defn}
\noindent
The approximation $g^{\lambda}$ inherits the convexity of $g$ and is always continuously differentiable. In particular, $g^{\lambda}$ is gradient Lipschitz: for $x,y\in \mathbb{R}^{d}$,
\begin{align*}
    \lVert \nabla g^{\lambda}(\textbf{x})- \nabla g^{\lambda}(\textbf{y}) \rVert \leq \frac{1}{\lambda}\lVert \textbf{x}-\textbf{y} \rVert,
\end{align*}
where the gradient $\nabla g^{\lambda}(\textbf{x})$ is given by:
\begin{align}
    \nabla g^{\lambda}(\textbf{x})=\frac{1}{\lambda}\left( \textbf{x} - \texttt{prox}^{\lambda}_{g}(\textbf{x})\right).
\end{align}
Furthermore, by \cite[Theorem 1.25]{rockafellar2009variational}, $g^{\lambda}(\textbf{x})$ converges pointwise to $g(\textbf{x})$ as $\lambda$ tends to zero.
\subsection{Optimal transport and Wasserstein space}
Optimal transport \cite{villani2021topics} has received much attention in the machine learning community and has been shown to be an effective tool for comparing probability distributions in many applications \cite{nguyen2023linear,nguyen2021learning,NIPS2019_9539}.
Formally, given a measurable map $T:\mathcal{X}\rightarrow \mathcal{X}$ and $p\in \mathcal{P}_{2}(\mathcal{X})$, we say that $q$ is the \textit{push-forward measure} of $p$ under $T$, denoted by $q=T\sharp p$, if for every Borel set $E\subseteq \mathcal{X}$, $q(E)=p(T^{-1}(E))$. For any $p,q\in \mathcal{P}_{2}(\mathcal{X})$, the $2$-Wasserstein distance $\mathcal{W}_{2}(p,q)$ is defined as:
\begin{align*}
   \mathcal{W}_{2}^{2}(p, q)= \inf_{\pi \in \Pi(p,q)} \int_{\mathcal{X}\times \mathcal{X}} \lVert \textbf{x} - \textbf{x}^\prime\rVert_{2}^{2}\mathrm{d}\pi(\textbf{x},\textbf{x}^\prime),
\end{align*}
where $\Pi(p,q)$ is all probability measures on $\mathcal{X}\times \mathcal{X}$ whose two marginals are equal to $p$ and $q$, and $\lVert\cdot \rVert_{2}$ denotes the Euclidean norm. It is known that the metric space $(\mathcal{P}_{2}(\mathcal{X}), \mathcal{W}_{2})$, also known as Wasserstein space, is an infinite-dimensional geodesic space \cite[Definition 6.4]{villani2009optimal}. 
In particular, a curve on $\mathcal{P}_{2}(\mathcal{X})$ can be expressed as: $\rho:[0,1]\times \mathcal{X}\rightarrow [0,\infty)$, where $\rho(t,\cdot)\in \mathcal{P}_{2}(\mathcal{X})$ for all $t\in[0,1]$. A tangent vector at $p$ can be expressed as $\partial \rho/\partial t$ for some curve $\rho$ such that $\rho(0, \cdot)=p(\cdot)$. In addition, under certain regularity conditions, the following elliptical equation $\partial \rho/\partial t=- \texttt{div}(\rho \nabla u)$ admits a unique solution $u:\mathcal{X}\rightarrow \mathbb{R}$ \cite{denny2010unique}, where $\texttt{div}$ denotes the divergence operator on $\mathcal{X}$. Therefore, the tangent vector $\partial \rho/\partial t$ is uniquely identified with the vector-valued mapping $\nabla u$. Furthermore, we can endow the manifold $\mathcal{P}_{2}(\mathcal{X})$ with a Riemannian metric \textcolor{blue}{\cite[p.~250]{villani2021topics}}, as follows: for any $s_{1},s_{2}$ are two tangent vectors at $p$, let $u_{1},u_{2}:\mathcal{X}\rightarrow \mathbb{R}$ be the solutions to the following elliptic equations $s_{1}=- \texttt{div}(\rho \nabla u_{1})$ and $s_{2}=- \texttt{div}(\rho \nabla u_{2})$, respectively. The inner product between $s_{1}$ and $s_{2}$ is defined as follows:
\begin{equation*}
    \langle s_{1}, s_{2}\rangle_{p}=\int_{\mathcal{X}}\langle \nabla u_{1}(\textbf{x}),\nabla u_{2}(\textbf{x})\rangle p(\textbf{x})d\textbf{x}.
\end{equation*}

\begin{defn}
    \label{def:firstvariational}
    (The first variation of functional) 
    the first variation of $F$ evaluated at $p$, denoted by $\partial F(p)/\partial p:\mathcal{X}\rightarrow \mathbb{R}$, is  given as follows:
\begin{align*}
    \lim_{\epsilon\rightarrow 0}\frac{1}{\epsilon} \left( F(p+\epsilon \chi) - F(p) \right)=
    \int_{\mathcal{X}} \frac{\partial F(p)}{\partial p}(\textbf{x})\chi(\textbf{x})\mathrm{d}\textbf{x},
\end{align*}
for all $\chi=q-p$, where $q\in \mathcal{P}_{2}(\mathcal{X})$.
\end{defn}
The Wasserstein gradient of $F$, denoted by $\texttt{grad}F$, relates to the gradient of the first variation of $F$ via the following continuity equation:
\begin{align}
\label{eqn:grad}
    \texttt{grad}F(p)(\textbf{x})=-\texttt{div}\left(p(\textbf{x})\nabla\frac{\partial F(p)}{\partial p}(\textbf{x})\right),
\end{align}
for all $\textbf{x}\in \mathcal{X}$.

\begin{defn}
    \label{def:stronglyconvex}
    (Geodesically strong convexity) 
    If $F$ is geodesically $\mu$-strongly convex with respect to the $2$-Wasserstein distance, then for $\forall p$, $p^\prime \in \mathcal{P}_{2}(\mathcal{X})$, we have: 
\begin{align*}
     F(p^\prime) &\geq F(p) + \langle \texttt{grad}F(p), \texttt{Exp}_{p}^{-1}(p^\prime)\rangle_{p} + \frac{\mu}{2}\cdot \mathcal{W}^{2}_{2}(p^\prime,p),
\end{align*}
\end{defn}
\noindent
where $\texttt{Exp}_{p}$ denotes the exponential mapping, which specifies how to move $p$ along a tangent vector on $\mathcal{P}_{2}(\mathcal{X})$ and $\texttt{Exp}_{p}^{-1}$ denotes its inversion mapping, which maps a point on $\mathcal{P}_{2}(\mathcal{X})$ to a tangent vector. We refer the readers to \cite{santambrogio2015optimal} for more details.


\subsection{Variational Transport}
\label{subsection:vt}
To optimize $F$ defined on the unconstrained domain, we can utilize functional gradient descent with respect to the geodesic distance. This approach involves constructing a sequence of probability distributions $\left\{ q_{t}\right\}_{t\geq 1}$ in $\mathcal{P}_{2}(\mathcal{X})$ as follows:
\begin{align}
\label{eqn:functionalgd}
    q_{t+1}\leftarrow \texttt{Exp}_{q_{t}}[ -\eta_{t}\cdot \texttt{grad}F(q_{t})],
\end{align}
where $\eta_{t}$ is the step size at the $t$-th step. The VT algorithm \cite{liu2021infinite} solves the distributional optimization problem by approximating $q_{t}$ with an empirical measure $\Tilde{q}_{t}$ obtained from $N$ particles $\left\{\textbf{x} _{t,i} \right\}_{i\in \left[N\right]}$. VT assumes that $F$ can be expressed in the variational representation (\ref{eqn:varform}). One advantage of this formulation is that the Wasserstein gradient can be computed based on the solution $h^{*}_{t}$ of problem (\ref{eqn:varform}), which can be estimated using samples drawn from $q_{t}$. Specifically, it can be shown that $h^{*}_{t}=\partial F/\partial q_{t}$, representing the first variation of $F$ \cite[Proposition 3.1]{liu2021infinite}. Furthermore,  under the assumption that $\nabla h^{*}_{t}$ is $h$-Lipschitz continuous, it can be demonstrated that for any $\eta_{t}\in \left[ 0,1/h\right)$, the exponential mapping in (\ref{eqn:functionalgd}) is equivalent to the push-forward mapping defined by $h^{*}_{t}$: for $\textbf{x}_{t,i}\sim q_{t}$:

\begin{align}
    \textbf{x}_{t+1,i}\leftarrow \texttt{Exp}_{\textbf{x}_{t,i}}[ -\eta_{t}\cdot \nabla h^{*}_{t}(\textbf{x}_{t,i})],
\end{align}
where $\textbf{x}_{t+1,i}$ is the updated particle, which is drawn from $q_{t+1}$, and $\texttt{Exp}_{\textbf{x}}[\eta\cdot \nabla u]$ denotes the transportation map, which sends $\textbf{x}\in  \mathcal{X}$ to a point $\textbf{x}+\eta\cdot \nabla u \in \mathcal{X}$ \cite[Proposition 3.2]{liu2021infinite}.
In addition, VT estimates the solution $h^{*}_{t}$ by solving problem (\ref{eqn:varform}) using finite samples drawn from $q_{t}$. This is achieved through stochastic gradient descent on the domain $\mathcal{X}$:

\begin{align}
\label{eqn:varformmax}
     \Tilde{h_{t}^{*}}= \argmax_{h\in\Tilde{\mathcal{H}}}{\left\{\frac{1}{N}\sum_{i=1}^{N}h(\textbf{x}_{t,i})-F^{*}(h)\right\}},
\end{align}
where $\Tilde{\mathcal{H}}$ is a function class, which can be specified to be the following class of deep neural networks:
\begin{align}
    \label{eqn:nnclass}
    \Tilde{\mathcal{H}}=\left\{\Tilde{h}  \Bigm| \Tilde{h}(\textbf{x})=\frac{1}{\sqrt{n_{w}}}\sum_{i=1}^{n_{w}}b_{i}\sigma([\textbf{w}]_{i}^{T}\textbf{x})\right\},
\end{align}
where $n_{w}$ is the width of the neural networks, $[\textbf{w}]_{i}\in \mathbb{R}^{d}$, $\textbf{w}=([\textbf{w}]_{1},...,[\textbf{w}]_{n_{w}})^{T}\in \mathbb{R}^{n_{w}\times d}$ is the input weight, $\sigma$ denotes a smooth activation function, and $b_{i}\in \left\{ -1,1\right\}$. In each iteration, the weights $\textbf{w}$ are guaranteed to lie in the $l_{2}$-ball centered at the initial weights $\textbf{w}(0)$ with radius $r_{h}$ defined as $\mathcal{B}^{0}(r_{h})=\left\{\textbf{w}: \lVert\textbf{w} - \textbf{w}(0) \rVert_{2}\leq r_{h} \right\}$. This choice of neural network class facilitates the analysis of the gradient error induced by the difference between $h^{*}_{t}$ and $ \Tilde{h_{t}^{*}}$ \cite{liu2021infinite}.

It is important to note that VT is a particle-based algorithm for solving the optimization problem: $\min_{q\in \mathcal{P}_{2}(\mathcal{X})}F(q)$, and it can handle various applications in machine learning. When $F$ represents the KL divergence, the optimal solution to the variational form (\ref{eqn:varformKL}) is given by $h^{*}(\textbf{x})=\log q(\textbf{x})/\pi(\textbf{x})$. Thus, the update of a sample at $k$-th iteration is given by: $\textbf{x}_{k+1}=\textbf{x}_{k}+\eta\log \pi(\textbf{x}_{k}) - \eta \log q_{k}(\textbf{x}_{k})$, where $q_{k}$ represents the density of sample at $k$-th iteration, corresponding to Lavengin MCMC algorithm \cite{cheng2018underdamped}. When $F$ is the Jensen-Shanon divergence, as shown in \cite{nguyen2023mirror}, its variational form can be reformulated as follows:
\begin{align*}
     \sup_{h^\prime\in\mathcal{H}^\prime}{\left\{ \frac{1}{2}\mathbb{E}_{\textbf{x}\sim q}\left[ \log\left( 1-h^\prime(\textbf{x}) \right) \right]  + \frac{1}{2}\mathbb{E}_{\textbf{y}\sim \pi}\left[ \log(h^\prime(\textbf{x})) \right]\right\}},
\end{align*}
where $\mathcal{H}^\prime$ is the space of functions with outputs between 0 and 1. It is noted that this reformulated form aligns with the objective of learning the discriminator in GANs \cite{goodfellow2020generative}.
In our experiments, we compare the proposed method to VT with different forms of $F$. This implies that we compare it to existing methods such as MCMC and GANs.

\section{Moreau-Yoshida Variational Transport}
To optimize $G$ in problem (\ref{eqn:regDP}), we can employ functional gradient descent to construct a sequence of probability distributions $\left\{ q_{t}\right\}_{t\geq 1}$ in $\mathcal{P}_{2}(\mathcal{X})$ as follows:
\begin{align}
\label{eqn:functionalgd}
    q_{t+1}\leftarrow \texttt{Exp}_{q_{t}}[ -\eta_{t}\cdot \texttt{grad}G(q_{t})]
\end{align} 
To develop an implementable algorithm, we need to characterize the Wasserstein gradient $\texttt{grad}G(q)$, as defined in (\ref{eqn:grad}). However, we face two challenges. First, the gradient of the first variation of $G$ is undefined due to the non-smoothness of function $g$, so we cannot define the Wasserstein gradient $\texttt{grad}G(q)$. Second, even if $G$ is smooth, defining $\texttt{grad}G(q)$ is non-trivial compared to the gradient descent algorithm in Euclidean space, as estimating the first variation of $G$ is necessary. 
To address these challenges, we introduce Moreau-Yoshida Variational Transport (MYVT). For the non-smoothness of $g$, we utilize the Moreau-Yoshida envelope to provide a smooth approximation of $g$ in the objective. To estimate the first variation of the approximate objective, we assume that $F$ has the variational representation. We subsequently reformulate the approximate problem as a concave-convex saddle point problem and develop an efficient primal-dual algorithm to approximate the saddle point. Further details are provided in the following subsections.

\subsection{Moreau-Yoshida approximation of problem (\ref{eqn:regDP})}
To address the non-smoothness of function $g$, our approach is to replace $g$ with its envelope $g^{\lambda}$, which leads to the following smooth approximate distributional optimization problem:
\begin{align}
\label{eqn:smoothregDP}
    \min_{q\in \mathcal{P}_{2}(\mathcal{X})}\left\{ G^{\lambda}(q)\coloneqq F(q) + \alpha\mathbb{E}_{\textbf{x}\sim q}[g^{\lambda}(\textbf{x})] \right\}.
\end{align}
We denote $\pi$ and $\pi^{\lambda}$ as the optimal solutions of problems (\ref{eqn:regDP}) and (\ref{eqn:smoothregDP}), respectively.  The following theorem establishes a connection between the two solutions.

\begin{thm}
\label{thm:twosolutions} Given that $F(q)$ is geodesically $\mu$-strongly convex ($\mu$>0), the solution $\pi^{\lambda}$ converges to $\pi$ as $\lambda$ goes to 0 with respect to the 2-Wasserstein distance, i.e.
\begin{equation}
    \label{eqn:limW2}
    \lim_{\lambda\to 0}\mathcal{W}^{2}_{2}(\pi^{\lambda}, \pi)=0.
\end{equation}
If $g$ is Lipschitz, i.e., for all $\textbf{x},\textbf{y}\in \mathbb{R}^{d}$, $|g(\textbf{x})-g(\textbf{y})|\leq \lVert g \rVert_{\text{Lip}}\lVert \textbf{x}-\textbf{y}  \rVert$, then for all $\lambda>0$,
\begin{equation}
    \label{eqn:boundW2}
   \mathcal{W}^{2}_{2}(\pi^{\lambda}, \pi)\leq \frac{\alpha \lambda}{2\mu}\lVert g \rVert^{2}_{\text{Lip}}.
\end{equation}
\end{thm}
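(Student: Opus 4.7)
The plan is to exploit the geodesic strong convexity of the regularized objectives $G$ and $G^{\lambda}$, together with the fact that the Moreau--Yoshida envelope sandwiches $g$ between itself and $g$ minus a $\lambda$-dependent error. Since $g$ is convex, the linear-in-$q$ term $\mathbb{E}_q[g^{\lambda}]$ is displacement (hence geodesically) convex, so both $G$ and $G^{\lambda}$ inherit $\mu$-strong geodesic convexity from $F$. The two optimizers $\pi$ and $\pi^{\lambda}$ satisfy the corresponding first-order/strong-convexity inequalities from Definition~\ref{def:stronglyconvex}:
\begin{align*}
G(\pi^{\lambda})-G(\pi)&\geq \tfrac{\mu}{2}\mathcal{W}_{2}^{2}(\pi^{\lambda},\pi),\\
G^{\lambda}(\pi)-G^{\lambda}(\pi^{\lambda})&\geq \tfrac{\mu}{2}\mathcal{W}_{2}^{2}(\pi^{\lambda},\pi).
\end{align*}
Adding them, the $F$-terms cancel and only the regularizer survives, yielding
\[
\mu\,\mathcal{W}_{2}^{2}(\pi^{\lambda},\pi)\;\leq\;\alpha\,\mathbb{E}_{\pi^{\lambda}}\bigl[g-g^{\lambda}\bigr]-\alpha\,\mathbb{E}_{\pi}\bigl[g-g^{\lambda}\bigr].
\]
Since $g\geq g^{\lambda}$ pointwise (take $\textbf{y}=\textbf{x}$ in the defining infimum), the second term on the right-hand side is nonpositive, so
\[
\mu\,\mathcal{W}_{2}^{2}(\pi^{\lambda},\pi)\;\leq\;\alpha\,\mathbb{E}_{\pi^{\lambda}}\bigl[g-g^{\lambda}\bigr].
\]

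For the quantitative bound (\ref{eqn:boundW2}), I would then invoke the standard Moreau envelope estimate for Lipschitz $g$: if $|g(\textbf{x})-g(\textbf{y})|\leq \lVert g\rVert_{\text{Lip}}\lVert\textbf{x}-\textbf{y}\rVert$, completing the square in the defining infimum gives $0\leq g(\textbf{x})-g^{\lambda}(\textbf{x})\leq \tfrac{\lambda}{2}\lVert g\rVert_{\text{Lip}}^{2}$ for every $\textbf{x}$. Plugging this pointwise bound into the previous inequality and dividing by $\mu$ yields exactly $\mathcal{W}_{2}^{2}(\pi^{\lambda},\pi)\leq \tfrac{\alpha\lambda}{2\mu}\lVert g\rVert_{\text{Lip}}^{2}$.

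For the qualitative convergence (\ref{eqn:limW2}) \emph{without} assuming $g$ Lipschitz, I would argue via a $\Gamma$-convergence / tightness route. The same strong-convexity inequality already gives $\mu\,\mathcal{W}_{2}^{2}(\pi^{\lambda},\pi)\leq \alpha\,\mathbb{E}_{\pi^{\lambda}}[g-g^{\lambda}]$, and the optimality chain $G^{\lambda}(\pi^{\lambda})\leq G^{\lambda}(\pi)\leq G(\pi)<\infty$ confines $\{\pi^{\lambda}\}$ to a bounded Wasserstein neighborhood of $\pi$, yielding tightness. Because $g^{\lambda}\nearrow g$ pointwise and monotonically as $\lambda\searrow 0$ (Theorem~1.25 of \cite{rockafellar2009variational}), any Wasserstein cluster point $\pi^{*}$ of $\pi^{\lambda}$ must, by lower semicontinuity of $G$ and $\liminf G^{\lambda}(\pi^{\lambda})\geq G(\pi^{*})$, satisfy $G(\pi^{*})\leq G(\pi)$; strong convexity of $G$ then forces $\pi^{*}=\pi$, and the whole family converges.

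The main obstacle is justifying $\mathbb{E}_{\pi^{\lambda}}[g-g^{\lambda}]\to 0$ rigorously when $g$ is merely convex, since the measure against which we integrate is itself moving with $\lambda$; a direct monotone or dominated convergence argument does not apply. Passing through tightness plus lower semicontinuity, as sketched above, is the cleanest workaround, and isolates the convex case from the quantitative Lipschitz case where the pointwise envelope bound carries the whole argument.
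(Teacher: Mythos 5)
Your proof is essentially the paper's. Adding the two strong-convexity inequalities for $G$ at its minimizer $\pi$ and for $G^{\lambda}$ at its minimizer $\pi^{\lambda}$ is algebraically identical to the paper's three-term decomposition of $G^{\lambda}(\pi)-G(\pi)$, and both routes land on the same key estimate $\mu\,\mathcal{W}_{2}^{2}(\pi^{\lambda},\pi)\leq\alpha\,\mathbb{E}_{\pi^{\lambda}}\left[g-g^{\lambda}\right]$ (the paper's (\ref{ineq:w2})); the Lipschitz bound (\ref{eqn:boundW2}) is then obtained by exactly the same completion-of-the-square estimate $0\leq g-g^{\lambda}\leq\tfrac{\lambda}{2}\lVert g\rVert_{\text{Lip}}^{2}$. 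The only genuine divergence is in the qualitative limit (\ref{eqn:limW2}): the paper simply asserts that pointwise convergence $g^{\lambda}\to g$ forces $\mathbb{E}_{\pi^{\lambda}}[g-g^{\lambda}]\to 0$, which, as you correctly observe, does not follow directly because the integrating measure $\pi^{\lambda}$ itself moves with $\lambda$. Your tightness/$\Gamma$-convergence detour is more honest than the paper's one-line claim, but as sketched it is not yet closed either: without Lipschitzness $g-g^{\lambda}$ need not be bounded, so the chain $G^{\lambda}(\pi^{\lambda})\leq G^{\lambda}(\pi)\leq G(\pi)$ controls the values $G^{\lambda}(\pi^{\lambda})$ but does not by itself confine $\{\pi^{\lambda}\}$ to a bounded $\mathcal{W}_{2}$-ball (that would require lower-bounding $G^{\lambda}$ coercively), and the step $\liminf_{\lambda\to 0}G^{\lambda}(\pi^{\lambda})\geq G(\pi^{*})$ needs that tightness as an input before monotone convergence $g^{\lambda}\nearrow g$ and lower semicontinuity can be invoked. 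In short: for the central inequality and for (\ref{eqn:boundW2}) you reproduce the paper's argument; for (\ref{eqn:limW2}) you have identified a real gap that the paper glosses over, and your proposed repair is the right idea but still owes a justification of tightness of the family $\{\pi^{\lambda}\}_{\lambda>0}$.
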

\noindent
The proof of Theorem \ref{thm:twosolutions} is postponed to Appendix \ref{appendix:twosolutions}. 

\subsection{Primal-Dual Approach to problem (\ref{eqn:smoothregDP})}


\noindent
The objective in problem (\ref{eqn:smoothregDP}) still poses a challenge as it covers the entire space of probability distributions, making it generally computationally intractable. To tackle this problem, a common approach is to employ specific parameterization forms for the distribution $q$ and optimize its parameters or approximate it using a set of particles, as discussed in \cite{liu2021infinite, nguyen2023mirror}. However, these approaches often have limitations in terms of approximation ability and memory resources they require. Taking inspiration from \cite{wang2016learning}, we propose an alternative method of implicitly representing $q$ using a neural network. In this approach, we generate $\textbf{x}_{\epsilon} \sim q$ by passing $\epsilon$ drawn from a simple distribution $p_{\epsilon}$ through a network, i.e. $\textbf{x}_{\epsilon}=V(\epsilon, \theta)$, where $\theta$ denotes the network parameters, which are iteratively adjusted to minimize the objective in problem (\ref{eqn:smoothregDP}). 
In the following theorem, we present an equivalent primal-dual view of problem (\ref{eqn:smoothregDP}) by utilizing the variational representation of $F$:

\begin{thm}
\label{thm:saddle} We can formulate problem (\ref{eqn:smoothregDP}) equivalently as:

\begin{equation}
    \label{eqn:saddle1}
    \max_{h\in\mathcal{H}} \mathbb{E}_{\epsilon\sim p_{\epsilon}}\left[ \min_{\textbf{x}_{\epsilon}\in\mathcal{X}} \left\{ h(\textbf{x}_{\epsilon})+ \alpha g^{\lambda}(\textbf{x}_{\epsilon}) \right\}  \right] -F^{*}(h).
\end{equation}
\end{thm}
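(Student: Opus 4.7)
The plan is to combine the variational representation of $F$ with the reparameterization ansatz $\textbf{x}_{\epsilon}=V(\epsilon,\theta)$, $\epsilon\sim p_{\epsilon}$, and then swap a min and a sup. First, I would substitute (\ref{eqn:varform}) directly into (\ref{eqn:smoothregDP}). Since $\alpha\mathbb{E}_{\textbf{x}\sim q}[g^{\lambda}(\textbf{x})]$ is independent of $h$, it can be absorbed inside the supremum, producing
\begin{equation*}
\min_{q\in\mathcal{P}_{2}(\mathcal{X})}\,\sup_{h\in\mathcal{H}}\left\{\mathbb{E}_{\textbf{x}\sim q}\bigl[h(\textbf{x})+\alpha g^{\lambda}(\textbf{x})\bigr]-F^{*}(h)\right\}.
\end{equation*}

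Second, I would interchange the $\min_{q}$ and the $\sup_{h}$. The inner functional is affine in $q$ (an expectation), and by convexity of $F^{*}$ it is concave in $h$; together with the convexity of $\mathcal{P}_{2}(\mathcal{X})$ and of $\mathcal{H}$ (which in the paper is either an RKHS ball or the overparameterized network class around $\textbf{w}(0)$ whose effective hull is convex), a Sion-type minimax theorem applies under the standing assumption that the supremum in (\ref{eqn:varform}) is attained. This yields
\begin{equation*}
\sup_{h\in\mathcal{H}}\left\{\min_{q\in\mathcal{P}_{2}(\mathcal{X})}\mathbb{E}_{\textbf{x}\sim q}\bigl[h(\textbf{x})+\alpha g^{\lambda}(\textbf{x})\bigr]-F^{*}(h)\right\}.
\end{equation*}

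Third, I would rewrite the inner minimization using the reparameterization. For fixed $h$, the map $q\mapsto\mathbb{E}_{\textbf{x}\sim q}[h(\textbf{x})+\alpha g^{\lambda}(\textbf{x})]$ is linear in $q$, so its infimum over $\mathcal{P}_{2}(\mathcal{X})$ is $\inf_{\textbf{x}\in\mathcal{X}}\{h(\textbf{x})+\alpha g^{\lambda}(\textbf{x})\}$, attained by Dirac masses at minimizers. Equivalently, if $p_{\epsilon}$ is atomless (as is standard for the noise distribution), every $q\in\mathcal{P}_{2}(\mathcal{X})$ arises as a pushforward $V\sharp p_{\epsilon}$ for some measurable $V$, so that minimizing over $q$ is the same as minimizing over such maps $V$. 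A standard measurable selection argument then gives
\begin{equation*}
\inf_{V}\mathbb{E}_{\epsilon\sim p_{\epsilon}}\bigl[h(V(\epsilon))+\alpha g^{\lambda}(V(\epsilon))\bigr]=\mathbb{E}_{\epsilon\sim p_{\epsilon}}\Bigl[\min_{\textbf{x}_{\epsilon}\in\mathcal{X}}\bigl\{h(\textbf{x}_{\epsilon})+\alpha g^{\lambda}(\textbf{x}_{\epsilon})\bigr\}\Bigr],
\end{equation*}
and substituting into the displayed sup yields exactly (\ref{eqn:saddle1}).

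The main obstacle is the minimax interchange: one must verify the hypotheses of Sion's theorem (or Ky Fan's) rigorously, which requires either weak-$*$ compactness of a relevant sublevel set of $q$ or a growth condition on $F^{*}$ that forces the optimal $h$ to live in a compact subset of $\mathcal{H}$. A secondary technical point is the measurable selection step together with the atomlessness of $p_{\epsilon}$, needed to identify pushforwards with arbitrary elements of $\mathcal{P}_{2}(\mathcal{X})$; I would state these as explicit hypotheses (as the paper tacitly assumes) rather than try to remove them.
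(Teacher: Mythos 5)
Your proposal follows essentially the same route as the paper's proof: substitute the variational representation of $F$, swap $\min_q$ and $\sup_h$ by concave-convexity/strong duality, then reparameterize $q$ as a pushforward of $p_{\epsilon}$ and pull the minimization inside the expectation via the interchangeability principle of \cite{dai2017learning,rockafellar2009variational} (your Dirac-mass/measurable-selection argument is exactly that principle made explicit). The only difference is that you spell out the hypotheses (Sion-type conditions for the minimax swap, atomlessness of $p_{\epsilon}$) that the paper leaves implicit, which is a reasonable strengthening rather than a divergence.
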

\noindent
The primal-dual formulation (\ref{eqn:saddle1}) is derived by applying the variational representation of $F$ and interchangeability principle introduced in \cite{dai2017learning,rockafellar2009variational}. 
The detailed proof of Theorem \ref{thm:saddle} is postponed to Appendix \ref{appendix:saddle}.

Based on the finding of Theorem \ref{thm:saddle}, we can transition to handling the distribution $q$ for each local variable. Specifically, given $\epsilon\sim p_{\epsilon}$ and a fixed $h$, we can solve the following local optimization problem for $\textbf{x}_{\epsilon}$:

\begin{equation}
    \label{eqn:localopt}
    \textbf{x}^{*}_{\epsilon}=\argmin_{\textbf{x}_{\epsilon}\in\mathcal{X}}\left\{ h(\textbf{x}_{\epsilon}) + \alpha g^{\lambda}(\textbf{x}_{\epsilon})\right\}.
\end{equation}

The objective function of each problem (\ref{eqn:localopt}) is differentiable, so it is common to use the classical gradient descent algorithm to optimize it. To simplify the analysis, we first consider the gradient flow, which is essentially the limit of gradient descent when step size tends to zero. The following theorem establishes a connection between the optimization of the local variables and the gradient flow for optimizing the regularized functional in (\ref{eqn:smoothregDP}). This connection holds in the limit case when utilizing a specific form of $h(\textbf{x}_{\epsilon})$:

\begin{thm}
\label{thm:gradflow} Let consider the following gradient flow for the local optimization problem (\ref{eqn:localopt}):
\begin{equation*}
\label{eqn:continuityequation}
    \frac{\partial \textbf{x}_{t}}{\partial t}=-v_{t}(\textbf{x}_{t}), \textbf{x}_{0}=V(\epsilon, \theta),
\end{equation*}
where $v_{t}(\textbf{x})\coloneqq \nabla h^{*}_{t}(\textbf{x})+\frac{\alpha}{\lambda}\left( \textbf{x}- \texttt{prox}^{\lambda}_{g}(\textbf{x})\right)$, for all $\textbf{x}\in\mathcal{X}$,\\ $h^{*}_{t} = \argmax_{h\in\mathcal{H}}{\left\{\mathbb{E}_{\textbf{x}\sim q_{t}}\left[h(\textbf{x})\right]-F^{*}(h)\right\}}=\frac{\partial F}{\partial q}\left( q_{t}\right)$ (see Subsection \ref{subsection:vt}), and $q_{t}$ is the distribution of particles $\textbf{x}_{t}$. Then, $q_{t}$ follows the following Fokker-Planck equation:
\begin{equation}
\label{eqn:continuityequation}
    \frac{\partial q_{t}}{\partial t}=\texttt{div}\left(  q_{t} v_{t}\right).
\end{equation}

\noindent
This is the Wasserstein gradient flow of $G^{\lambda}(q)$ in the space of probability distributions with 2-Wasserstein metric. Suppose $F$ is geodesically $\mu$-strongly convex. Then, the convergence of $G^{\lambda}(q_{t})$ is as follows, for $t\geq 0$:
\begin{align}
    G^{\lambda}(q_{t})-G^{\lambda}(\pi^{\lambda})\leq 
    \exp(-2\mu t)( G^{\lambda}(q_{0})-G^{\lambda}(\pi^{\lambda})).
\end{align}
\end{thm}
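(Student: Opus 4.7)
The plan is to split the argument into three steps: derive the continuity equation for $q_t$ from the particle ODE, identify $v_t$ with the gradient of the first variation of $G^{\lambda}$ at $q_t$ so as to recognize (\ref{eqn:continuityequation}) as the Wasserstein gradient flow of $G^{\lambda}$, and finally obtain exponential decay through a Polyak-\L{}ojasiewicz argument driven by geodesic strong convexity.

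For the first two steps, since $\textbf{x}_t$ solves $\partial_t \textbf{x}_t = -v_t(\textbf{x}_t)$, standard transport theory tells us the law $q_t$ satisfies $\partial_t q_t + \texttt{div}(q_t(-v_t)) = 0$, which is exactly (\ref{eqn:continuityequation}); this is a routine change-of-variables once $v_t$ is sufficiently regular. I would then identify $v_t$ with $\nabla(\partial G^{\lambda}/\partial q)(q_t)$. By linearity of the first variation, $\partial G^{\lambda}/\partial q(q_t)(\textbf{x}) = \partial F/\partial q(q_t)(\textbf{x}) + \alpha g^{\lambda}(\textbf{x})$. The discussion in Subsection \ref{subsection:vt} gives $h_t^{*} = \partial F/\partial q(q_t)$, and the envelope formula stated in the Preliminaries yields $\nabla g^{\lambda}(\textbf{x}) = (1/\lambda)\bigl(\textbf{x}-\texttt{prox}^{\lambda}_{g}(\textbf{x})\bigr)$. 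Combining these shows $v_t = \nabla(\partial G^{\lambda}/\partial q)(q_t)$, so (\ref{eqn:grad}) recognizes the right-hand side of the continuity equation as $-\texttt{grad}\,G^{\lambda}(q_t)$, confirming that $(q_t)_{t\geq 0}$ is the Wasserstein gradient flow of $G^{\lambda}$.

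For the convergence rate, I would differentiate along the flow and integrate by parts using the continuity equation to obtain the energy identity
\begin{equation*}
\frac{d}{dt}G^{\lambda}(q_t) = -\int_{\mathcal{X}}\lVert v_t(\textbf{x})\rVert^{2} q_t(\textbf{x})\,d\textbf{x} = -\lVert \texttt{grad}\,G^{\lambda}(q_t)\rVert_{q_t}^{2}.
\end{equation*}
Since $F$ is geodesically $\mu$-strongly convex and $q\mapsto \mathbb{E}_{\textbf{x}\sim q}[g^{\lambda}(\textbf{x})]$ is geodesically (displacement) convex because its potential $g^{\lambda}$ is convex on $\mathbb{R}^d$, $G^{\lambda}$ inherits geodesic $\mu$-strong convexity. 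Applying Definition \ref{def:stronglyconvex} with $p=q_t$ and $p^{\prime}=\pi^{\lambda}$, then using Cauchy-Schwarz and optimizing over the length of the tangent vector, yields the Polyak-\L{}ojasiewicz-type inequality $\lVert \texttt{grad}\,G^{\lambda}(q_t)\rVert_{q_t}^{2} \geq 2\mu\bigl(G^{\lambda}(q_t)-G^{\lambda}(\pi^{\lambda})\bigr)$. Combined with the energy identity this gives $(d/dt)\bigl(G^{\lambda}(q_t)-G^{\lambda}(\pi^{\lambda})\bigr)\leq -2\mu\bigl(G^{\lambda}(q_t)-G^{\lambda}(\pi^{\lambda})\bigr)$, and Gr\"onwall's inequality closes the proof.

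The main obstacle I anticipate lies in justifying the manipulations rigorously: ensuring that $v_t$ is regular enough for the particle flow, the continuity equation, and the integration by parts to be simultaneously valid without spurious boundary terms on $\mathcal{X}$. The $1/\lambda$-Lipschitz gradient of $g^{\lambda}$ handles the regularizer side automatically; on the variational side I would rely on the smooth activation assumption and the ball constraint used to define $\tilde{\mathcal{H}}$ in (\ref{eqn:nnclass}) to obtain locally Lipschitz $\nabla h_t^{*}$, in line with the regularity hypotheses already in force for VT.
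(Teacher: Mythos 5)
Your proposal follows essentially the same route as the paper's proof: derive the continuity equation from the particle ODE, identify $v_{t}$ with $\nabla\left(\partial G^{\lambda}/\partial q\right)(q_{t})$ via $h^{*}_{t}=\partial F/\partial q_{t}$ and the Moreau--Yoshida gradient formula, obtain the dissipation identity $\frac{\mathrm{d}}{\mathrm{d}t}G^{\lambda}(q_{t})=-\langle \texttt{grad}G^{\lambda}(q_{t}),\texttt{grad}G^{\lambda}(q_{t})\rangle_{q_{t}}$ by integration by parts, and close with a gradient-dominance (Polyak--\L{}ojasiewicz) inequality plus Gr\"onwall, exactly as in the paper's Lemma~\ref{lem:gradientdominance} (your Cauchy--Schwarz-and-optimize derivation of that inequality is just the completing-the-square argument the paper uses, in different clothing). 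If anything, your remark that $q\mapsto\mathbb{E}_{q}[g^{\lambda}]$ is merely displacement convex, so that the $\mu$-strong convexity of $G^{\lambda}$ is inherited from $F$ alone, is slightly more careful than the paper's phrasing.
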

\noindent
The proof of Theorem \ref{thm:gradflow} is postponed to Appendix \ref{appendix:flowsolution}. We observe that in limit case, when $F$ is geodesically convex, $q_{t}$ exponentially converges to the minimizer of $G^{\lambda}(q)$ as $t\rightarrow \infty$. Here, we emphasize two crucial points in our work. First, the Moreau-Yoshida approximation provides a smooth surrogate for $g$, making gradient computation feasible. More specifically, leveraging Theorem \ref{thm:saddle}, we transition to handling the optimization problem for each local variable. For this, we can apply gradient-based methods, which are designed for smooth, differentiable functions. Second, the smoothness of $g^{\lambda}$ guarantees the existence and uniqueness of the gradient flow $\partial \textbf{x}_{t}/\partial t=-v_{t}(\textbf{x}_{t})$, and we can derive the Fokker-Plank equation (\ref{eqn:continuityequation}).

To efficiently solve problem (\ref{eqn:localopt}), we can take advantage of the advancements in optimization literature. In this work, we will focus on utilizing gradient descent for its simplicity and effectiveness.
Specifically, in each iteration of our method, we draw a batch of random inputs $\left\{ \epsilon_{i}\right\}_{i=1}^{m}$, where $m$ is mini-batch size. We then calculate the corresponding outputs for these inputs, which are subsequently used as the initial particles:
\begin{equation*}
    \textbf{x}^{(0)}_{i}=V(\epsilon_{i}, \theta).
\end{equation*}
We perform $T$ steps of gradient updates to optimize problem (\ref{eqn:localopt}) for each particle, i.e., for $t=0,...,T-1$:
\begin{align}
\begin{split}
\label{eqn:updatex}
    &\Delta^{(t)}_{i}= \nabla_{\textbf{x}} h(\textbf{x}^{(t)}_{i})  + \frac{\alpha}{\lambda} (\textbf{x}^{(t)}_{i} - \texttt{prox}^{\lambda}_{g}(\textbf{x}^{(t)}_{i})),\\
    &\textbf{x}^{(t+1)}_{i} = \textbf{x}^{(t)}_{i} - \eta \Delta^{(t)}_{i}.
\end{split}
\end{align}
The particles obtained from the last update, denoted as $\textbf{x}^{(T)}_{i}$ ($i=1,...,m$), approximate the solutions of the local optimization problems and are utilized to estimate $h$ in problem (\ref{eqn:saddle1}).
Furthermore, the particles undergo updates over multiple steps to converge towards the minimizers of local optimization problems. Therefore, the parameters $\theta$ of $V$ need to be updated such that it outputs $\{\textbf{x}^{(t+1)}_{i} \}_{i=1}^{m}$ instead of $\{\textbf{x}^{(t)}_{i} \}_{i=1}^{m}$. In other words, we aim to update $\theta$ as follows:
\begin{equation}
\label{eqn:updatetheta}
    \theta^{(t+1)}\leftarrow \argmin_{\theta} \sum_{i=1}^{m}\lVert V(\epsilon_{i}, \theta) - \textbf{x}^{(t+1)}_{i}\rVert^{2}.
\end{equation}
As suggested in \cite{feng2017learning}, we can perform only one step of gradient descent as follows:
\begin{equation}
\label{eqn:fastupdatetheta}
    \theta^{(t+1)}\leftarrow \theta^{(t)} - \eta \sum_{i=1}^{m}\nabla_{\theta}V(\epsilon_{i}, \theta^{(t)})\Delta^{(t)}_{i}.
\end{equation}
While the update (\ref{eqn:fastupdatetheta}) is an approximation of (\ref{eqn:updatetheta}), it offers computational efficiency and has shown promising performance in our experiments. 

\subsection{Practical Implementation}
We are now ready to present our algorithm for solving problem (\ref{eqn:smoothregDP}). The components of our method can be easily parameterized by deep neural networks and their parameters are estimated by mini-batch gradient descent, ensuring its scalability for large-scale datasets. For instance, our method requires the initialization $\textbf{x}^{(0)}_{i}$, which is the output of the network $V(\epsilon_{i}, \theta)$, where $\theta$ is the parameters. The function $h$ is parameterized by another neural network with parameters $W$. Taking into account of above parameters, we have our proposed algorithm illustrated in Algorithm 1. We perform $K$ iterations. For each iteration, initial particles are obtained by drawing a mini-batch of $m$ random inputs and calculating their corresponding outputs through $V$, then we perform $T$ steps of updates for each  particle. The particles obtained from the last steps are used for estimating $h$ by performing $T^\prime$ steps of updates to optimize its parameters $W$.

\begin{algorithm}[H]
\label{alg:rvt}
\SetAlgoLined
\KwIn{Functional $F$, mini-batch size $m$, number of iterations $K$, number of steps $T$ to update $V(\cdot, \theta)$ (see (\ref{eqn:updatex}) and (\ref{eqn:fastupdatetheta})), number of steps $T^\prime$ to update $h_{W}$, step size $\eta$, scaling parameter $\lambda$ and regularization parameter $\alpha$.}
\KwOut{Networks $V(\cdot, \theta)$, $h_{W}(\cdot)$}
Randomly initialize $\theta$, $W$ (parameters of $V$ and $h$) \\
$k\leftarrow 0$\\
\While{$k< K$}{
    sample mini-batch $\left\{ \epsilon_{i}\right\}_{i=1}^{m}\sim p_{\epsilon}$ 
    
    compute $\textbf{x}^{(0)}_{i}=V(\epsilon_{i}, \theta)$, for $i=1,...,m$

    $t\leftarrow 0$

    \While{$t<T$}{
     $\Delta^{(t)}_{i}= \nabla_{x} h(\textbf{x}^{(t)}_{i})  + \frac{\alpha}{\lambda} (\textbf{x}^{(t)}_{i} - \texttt{prox}^{\lambda}_{g}(\textbf{x}^{(t)}_{i}))$
     
    update $\textbf{x}^{(t+1)}_{i} = \textbf{x}^{(t)}_{i} - \eta \Delta^{(t)}_{i}$, for $i=1,...,m$

    update $\theta\leftarrow \theta - \eta \sum_{i=1}^{m}\nabla_{\theta}V(\epsilon_{i}, \theta)\Delta^{(t)}_{i}$

    $t\leftarrow t+1$
    }
    $\textbf{x}^{*}_{i}\leftarrow \textbf{x}^{(T)}_{i}$, for $i=1,...,m$

     $t^\prime\leftarrow 0$

    \While{$t^\prime<T^\prime$}{

    update $W=W+\eta \left( \frac{1}{m}\sum_{i=1}^{m}\nabla_{W}h(\textbf{x}^{*}_{i}) - \nabla _{W}F^{*}(h_{W}) \right)$
    
    $t^\prime\leftarrow t^\prime+1$
    }
    
    $k\leftarrow k+1$
    }
 \caption{Moreau-Yoshida variational transport (MYVT)}
\end{algorithm}

\section{Numerical Experiments}

\begin{figure*}
    \centering
    \includegraphics[width=1.0\textwidth]{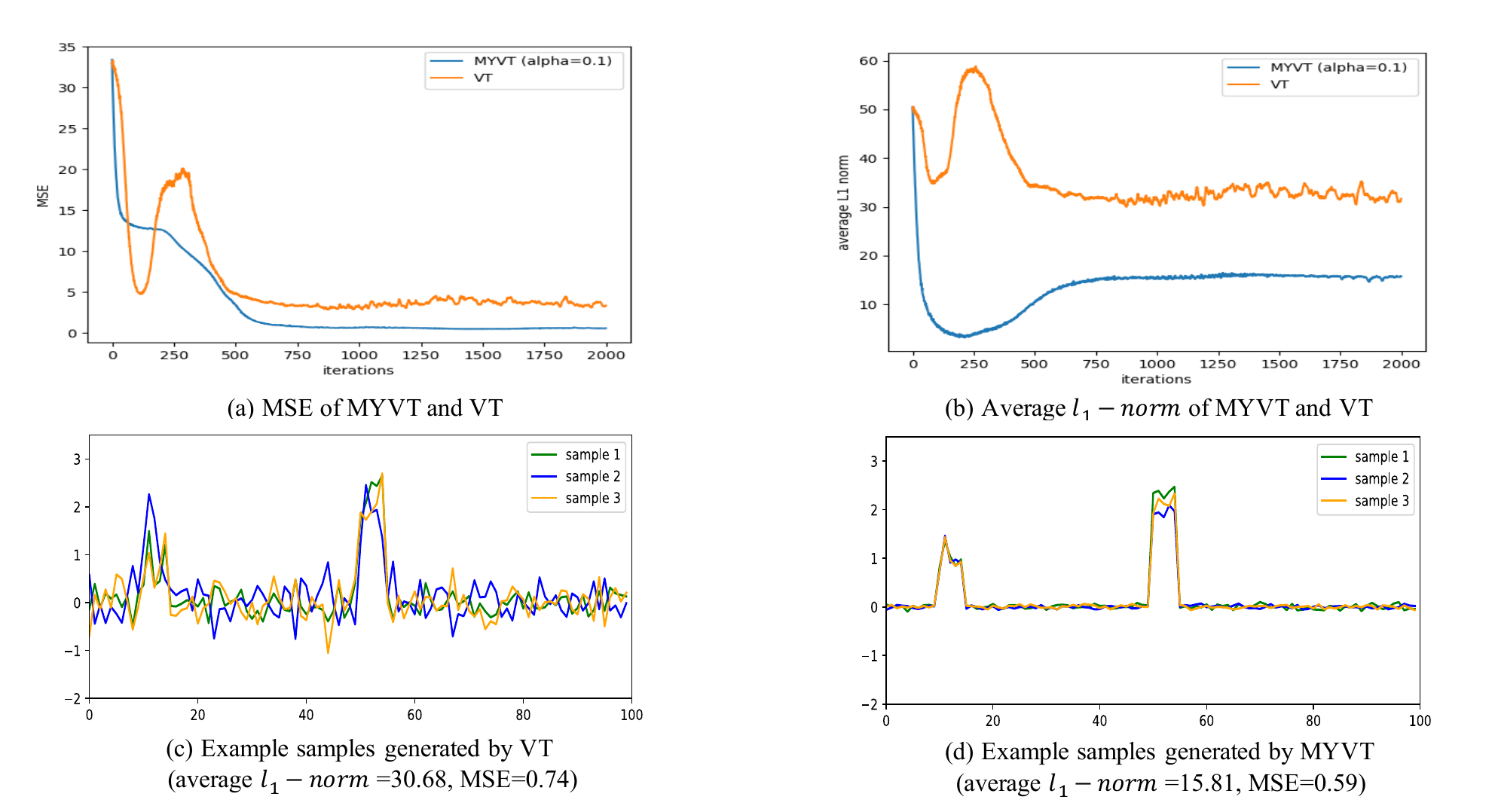}
    \caption{Comparison of MYVT ($\alpha=0.1$) and VT in terms of MSE and sparsity (average $l_{1}$-norm of generated samples) when $F$ is KL divergence i.e. $F(q)=KL(q, \pi)$. (a) MSE of MYVT and VT over 2000 iterations, (b) average $l_{1}$-norm over 2000 iterations, (c) three example samples generated by VT, (d) three example samples generated by MYVT.}
    \label{fig:sparsity}
\end{figure*}

In this section, we present numerical experiments on synthetic and real-world datasets to demonstrate the effectiveness of MYVT in noise robust signal/image generation. The code can be found at \url{https://github.com/haidnguyen0909/MYVT}.

\begin{figure*}
    \centering
    \includegraphics[width=1.0\textwidth]{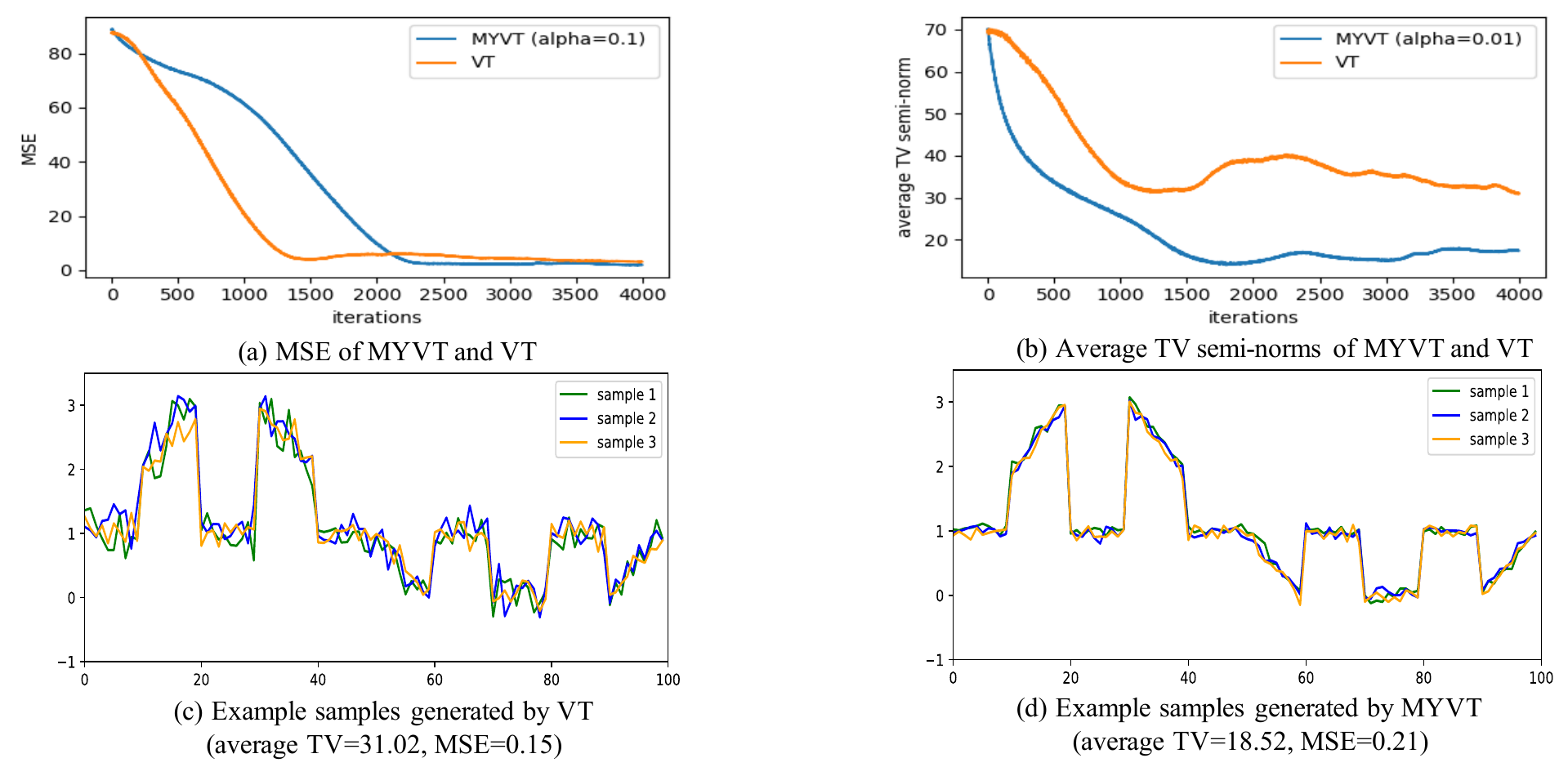}
    \caption{Evolution of example samples generated by VT and MYVT ($\alpha=100$) and VT in terms of MSE and smoothness (average TV semi-norm of generated samples) when $F$ is KL divergence i.e. $F(q)=KL(q, \pi)$. (a) MSE of MYVT and VT over 4000 iterations, (b) average TV semi-norm over 4000 iterations, (c) three example samples generated by VT, (d) three example samples generated by MYVT.}
    \label{fig:tv}
\end{figure*}

\subsection{Experiments with Synthetic Datasets}

\noindent
\textbf{Experimental settings}. We consider two case studies corresponding to two common choices
of the nonsmooth function $g$: (a) $l_{1}$-norm and (b) total variation (TV) semi-norm, which promotes sparsity of samples (i.e. few 
non-zero elements) and local constancy of elements (i.e. sparsity of the difference of nearby element), respectively. For the $l_{1}$-norm, the proximal map of $g(\textbf{x})=\lVert \textbf{x}\rVert_{1}$ is the well-known soft-thresholding operator $S_{\alpha}(\textbf{x})$ \cite{tibshirani1996regression}. For the TV semi-norm, the proximal map does not have a closed-form solution, but it can be efficiently estimated using the alternating direction method of multipliers (ADMM) \cite{marchuk1990splitting,wahlberg2012admm}. In our experiments, we perform 20 iterations of ADMM to estimate the proximal map. For the first case study, we design the truth $\textbf{z}\in \mathbb{R}^{100}$ to be sparse vector with only a few non-zero elements. For the second case study, we design $\textbf{z}$ to be a locally smooth vector. For each case study, we generate 500 examples $\left\{\textbf{y}_{i} \right\}_{i=1}^{500}$ by adding Gaussian noise with mean 0 and variance $0.04$ to the truth.
These settings allow to evaluate the performance of methods in recovering the underlying structure of the data in the presence of noise.

The generated examples are used to represent the target empirical distributions $\pi$ we aim to approximate. We set $F(q)=D(q, \pi)$, where $D$ represents a dissimilarity measure (KL or JS divergences in our experiments) between two probability distributions. 
We note that when $D$ represents the JS divergence, it can be shown that optimizing $JS(q, \pi)$ is equivalent to learning a GAN model. Indeed, as shown in \cite{nguyen2023mirror}, the variational representation of JS divergence is as follows:
\begin{align}
\label{eqn:gan1}
    JS(q, \pi) = \sup_{h\in\mathcal{H}^{c}}{\left\{ \mathbb{E}_{\textbf{x}\sim q}\left[ h(\textbf{x}) \right]   -JS^{*}(h)\right\}},
\end{align}
where $JS^{*}(h)=-\frac{1}{2}\mathbb{E}_{\textbf{x}\sim\pi}\left[ \log\left( 1-2e^{2h(\textbf{x})}\right) \right] - \frac{1}{2}\log 2$, and $\mathcal{H}^{c}$ is the space of function $h$ that satisfies: $h(\textbf{x})< 1/2\log(1/2)$ for all $\textbf{x}\in \mathcal{X}$. We introduce the following change of variable: $h^\prime(\textbf{x})=1-2e^{2h(\textbf{x})}$. It is easy to verify that $0<h^\prime(\textbf{x})<1$ for all $\textbf{x}\in \mathcal{X}$. Then, the variational representation of $JS$ divergence can be rewritten as:
\begin{align}
    \label{eqn:transform}
     \sup_{h^\prime\in\mathcal{H}^\prime}{\left\{ \frac{1}{2}\mathbb{E}_{\textbf{x}\sim q}\left[ \log\left( 1-h^\prime(\textbf{x}) \right) \right]  + \frac{1}{2}\mathbb{E}_{\textbf{x}\sim \pi}\left[ \log(h^\prime(\textbf{x})) \right]\right\}},
\end{align}
where $\mathcal{H}^\prime$ is the space of functions whose outputs are in between 0 and 1. As we have access to samples drawn from $q$ and $\pi$, we can estimate $h^\prime$ and then estimate $h$ using: $h(\textbf{x})=\frac{1}{2}\log\left( \frac{1-h^\prime(\textbf{x})}{2} \right)$ for all $\textbf{x}\in\mathcal{X}$. It can be verified that (\ref{eqn:transform}) corresponds to the objective function of learning the discriminator of GAN.\\

\noindent
\textbf{Comparing methods}. We compare MYVT and VT in the experiments. For VT, we represent $q$ using a set of particles and updating particles. For MYVT, we set $\alpha=0.1$ for both case studies. We evaluate and compare the quality of particles generated by the two methods using the following measures: a) mean squared error (MSE): the average squared difference between the generated samples and the truth $\textbf{z}$, (b) the average $l_{1}$-norm of generated samples for the first study case and (c) the average TV semi-norm of generated samples for the second study case. We parameterize $V$ using a neural network with four layers, each of which consists of a linear layer with 100 neurons, followed by an activation function.  We parameterize $h$ using another neural network with two layers, each of which has 100 neurons. The step sizes for VT and MYVT are fine-tuned and set to 0.01 and 0.0001, respectively. We run $K=2000$ and  $K=4000$ iterations for the first and the second case studies, respectively. For MYVT, we set $\lambda=0.0001$, $T=5$, $T^\prime=2$ and  step size $\eta=10^{-5}$ for all of experiments.\\

\noindent
\textbf{Results}. In the first case study, we compare MYVT and VT in terms of MSE and average $l_{1}$-norm when KL and JS divergence are used to represent $F$. As illustrated in Figure \ref{fig:sparsity}, when $F(q)=KL(q,\pi)$, both methods generate samples similar to the truth, indicated by the decreasing MSE values over 2000 iterations. However, MYVT maintains a significantly lower average $l_{1}$-norm than VT over iterations. Particularly, MYVT consistently produce samples with much lower average $l_{1}$-norm (around 15.81) compared to VT (around 30.68). This can be attributed to the effect of $g(\textbf{x})= \lVert \textbf{x} \rVert_{1}$ in problem (\ref{eqn:regDP}), promoting sparsity in the generated samples. Visually, samples generated by MYVT appear considerably sparser than those generated by VT (see Figures \ref{fig:sparsity}c and \ref{fig:sparsity}d). Similar effects are observed when considering $F(q)=JS(q,\pi)$, with figures are shown in Appendix \ref{appendix:syntheticexperiments}.

In the second case study, we compare MYVT and VT in terms of MSE and average TV semi-norm, when $F(q)=KL(q,\pi)$, as shown in Figure \ref{fig:tv}. Again both methods generate samples closely resembling the truth, evidenced by the significant decrease in MSE values over 4000 iterations (see Figure \ref{fig:tv}a), while MYVT maintains a significantly lower average TV semi-norm than VT over iterations (see Figure \ref{fig:tv}b). Particularly, the average TV semi-norm of samples generated by MYVT (around 18.52) is significantly smaller than that of samples generated by VT (around 31.02). Visually, samples generated by MYVT appear significantly smoother than those generated by VT (see Figures \ref{fig:tv}c and \ref{fig:tv}d). These results in both case studies demonstrate the regularization effect of problem (\ref{eqn:regDP}) on the generated samples and highlight the effectiveness of MYVT. Similar effects are observed when considering $F(q)=JS(q,\pi)$, with figures shown in Appendix \ref{appendix:syntheticexperiments}.

\begin{figure}[h]
\centering
    \begin{subfigure}[b]{.3\textwidth}
        \center
        \includegraphics[width=1.0\linewidth]{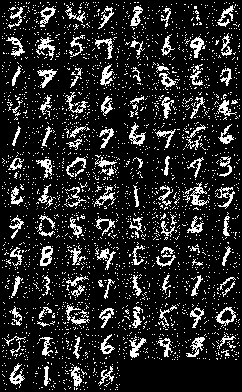}
        \caption{WGAN}
        \label{subfig1}
    \end{subfigure}
    \begin{subfigure}[b]{.3\textwidth}
        \center
        \includegraphics[width=1.0\linewidth]{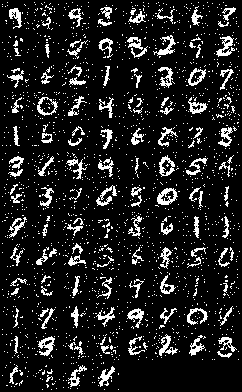}
        \caption{InfoGAN}
        \label{subfig1}
    \end{subfigure}
    \begin{subfigure}[b]{.3\textwidth}
        \center
        \includegraphics[width=1.0\linewidth]{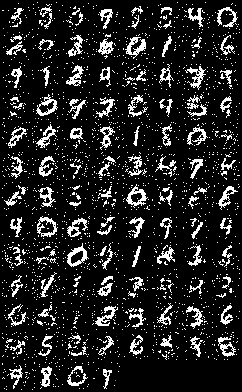}
        \caption{GAN}
        \label{subfig3}
    \end{subfigure}
    \centering
    
    \begin{subfigure}[b]{.3\textwidth}
        \center
        \includegraphics[width=1.0\linewidth]{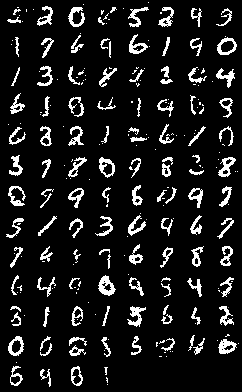}
        \caption{MYVT \\($\alpha=10^{-4}$)}
        \label{subfig:kl}
    \end{subfigure}
    \begin{subfigure}[b]{.3\textwidth}
        \center
        \includegraphics[width=1.0\linewidth]{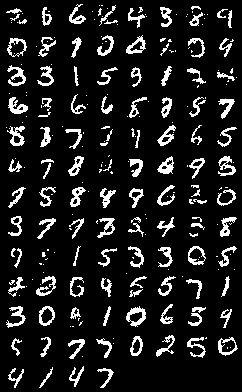}
        \caption{MYVT \\($\alpha=10^{-3}$)}
        \label{subfig:4}
    \end{subfigure}

    \caption{Synthetic images generated by WGAN (a), InfoGAN (b), GAN (c) and MYVT ((d) and (e)) trained on noisy training examples of MNIST at noise level $\sigma=0.5$.}
    \label{fig:syntheticimages0.5_mnist}
\end{figure}

\subsection{Experiments with Real-world Datasets}
In this subsection, we demonstrate the applicability of MYVT in noise-robust image generation. Specifically, we focus on the task of learning to generate clean images even when training images are corrupted by noise (noisy training examples). We utilize three common datasets: MNIST with 60,000 training images, FashionMNIST with 60,000 training images and Caltech101Silhouettes with 4,100 training images \footnote{Details of the datasets used in our experiments: MNIST (\url{https://www.kaggle.com/datasets/hojjatk/mnist-dataset}), FashionMNIST (\url{https://www.kaggle.com/datasets/zalando-research/fashionmnist}) and Caltech101Silhouettes (\url{https://people.cs.umass.edu/~marlin/data.shtml}).}. In our experimental setting, we assume that the generative models are trained on the noisy training examples. To simulate this scenario, we corrupt training images from these dataset by adding Gaussian noise to the clean training images, i.e., $\textbf{x} = \textbf{z} + \sigma \epsilon$, where $\textbf{z}\in\mathbb{R}^{h\times w}$ represents a clean image of width $w$ and height $h$ from the dataset; $\textbf{x}\in\mathbb{R}^{h\times w}$ represents a noisy image, and $\epsilon\sim \mathcal{N}(0, \textbf{I})$ is a Gaussian noise. For MYVT, we consider $F(q)=JS(q, \pi)$, where $\pi$ is represented by a collection of noisy images $\left\{\textbf{x}_{i} \right\}_{i=1}^{N}$. In our experiments, we compare MYVT to the following variants of GAN: original GAN \cite{goodfellow2020generative}, Wasserstein GAN (WGAN, \cite{arjovsky2017wasserstein}) and information-theoretic GAN (InfoGAN, \cite{10.5555/3157096.3157340}). For the regularization part, we consider two-dimensional TV semi-norm. Given an image $\textbf{x}\in \mathbb{R}^{h\times w}$, its two-dimensional TV semi-norm is defined as follows:
\begin{equation*}
TV(\textbf{x})=\sum_{i=1}^{h}\sum_{j=2}^{w}|\textbf{x}[i,j]-\textbf{x}[i,j-1]| +\sum_{j=1}^{w}\sum_{i=2}^{h}|\textbf{x}[i,j]-\textbf{x}[i-1,j]| .
\end{equation*}
Then, the optimization problem can be formulated as follows:
\begin{align*}
\label{eqn:regKL}
    \min_{q\in \mathcal{P}_{2}(\mathcal{X})} JS(q,\pi) + \alpha \mathbb{E}_{\textbf{x}\sim q}[\text{TV}(\textbf{x})] .
\end{align*}

\textcolor{blue}{
\begin{table}[]
    \centering
    \caption{grid search specs for MYVT.}
        \begin{tabular}{c c c}
        Settings & Description & Values \\
        \hline 
        $\alpha$ & regularization parameter & \{$10^{-5}$, $10^{-4}$, $10^{-3}$, $10^{-2}$\}\\
        $\eta$ & step size & \{$10^{-5}$, $10^{-4}$\}\\
        $\lambda$ & scaling parameter of Moreau-Yoshida envelope & \{$10^{-3}$, $10^{-2}$\}\\
        $K$ & number of iterations & \{$10^{5}$\}\\
        \texttt{nLayerh} & number of layers in $h^{\prime}$ & \{5\}\\
        \texttt{nLayerV} & number of layers in $V$ & \{5\}\\
        \texttt{optimh} & to use Adam or SGD for $h^{\prime}$ & Adam\\
        \texttt{optimV} & to use Adam or SGD for $V$ & Adam\\
        $T$ & number of steps to update $\theta$ of $h^{\prime}$ & \{$1$, $5$\}\\
        $T^{\prime}$ & number of steps to update $W$ of $V$ & \{$1$, $5$\}\\
        $m$ & mini-batch size & \{$100$\}\\
        \hline
        \# experiments & - & 64\\       
    \end{tabular}
    \label{tab:gridsearchspecs}
\end{table}
}

\noindent
\textbf{Experiment setting}. For MYVT, we conduct a grid search over a set of architectural choices and hyperparameters for each dataset. Formally, we specify the search grid in Table \ref{tab:gridsearchspecs}. We perform 64 experiments for each dataset to identify the optimal hyperparameter values for MYVT based on the quality of generated images. To quantitatively evaluate the generated images, we utilize the Frechet Inception distance (FID) and inception score (IS). We specify the best hyperparameter values as follows. We select different values for $\alpha$, including $10^{-4}$ or $10^{-3}$. We parameterize $V$ using a neural network with five layers, each containing 200 neurons. We parameterize $h^\prime$ using another neural network with five layers, each containing 200 neurons. We employ the ReLU activation function in the last layer to ensure the output of $h^\prime$ remains positive. The number of iterations $K$, mini-batch size $m$, $T$, $T^\prime$, scaling parameter $\lambda$ and the step size $\eta$ are set as $10^{5}$, 100, 1, 1, $10^{-3}$ and $10^{-5}$, respectively. For GAN models (including WGAN, InfoGAN and GAN), architecture choices and hyperparameter values, including $K$, $m$ and $\eta$, are kept consistent with MYVT to ensure a fair and meaningful comparison between the methods. The number of categories used in InfoGAN is set as 10. We use Adam \cite{kingma2014adam} for updating the neural network parameters. We train the models on the same machine with NVIDIA Quadro P5000 GPU with 16GB memory.\\


\begin{figure}
\centering
    \begin{subfigure}{.3\textwidth}
        \center
        \includegraphics[width=1.0\linewidth]{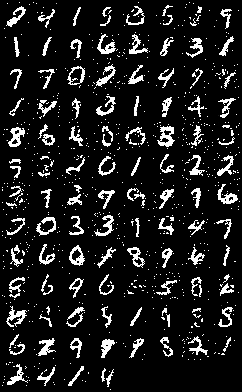}
        \caption{WGAN}
        \label{subfig:5}
    \end{subfigure}
    \begin{subfigure}{.3\textwidth}
        \center
        \includegraphics[width=1.0\linewidth]{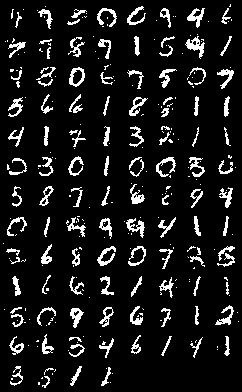}
        \caption{InfoGAN}
        \label{subfig:6}
    \end{subfigure}
    \begin{subfigure}{.3\textwidth}
        \center
        \includegraphics[width=1.0\linewidth]{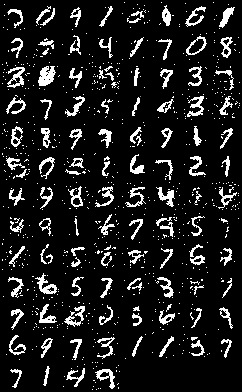}
        \caption{GAN}
        \label{subfig:7}
    \end{subfigure}

    \centering
    \begin{subfigure}{.3\textwidth}
        \centering
        \includegraphics[width=1.0\linewidth]{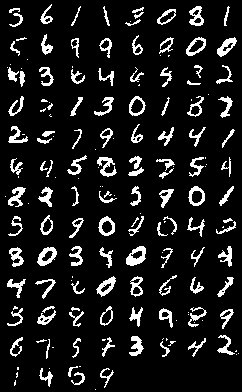}
        \caption{MYVT \\($\alpha=10^{-4}$)}
        \label{subfig:8}
    \end{subfigure}
    \begin{subfigure}{.3\textwidth}
        \center
        \includegraphics[width=1.0\linewidth]{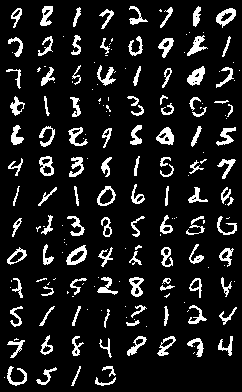}
        \caption{MYVT \\($\alpha=10^{-3}$)}
        \label{subfig:9}
    \end{subfigure}

    \caption{Synthetic images generated by WGAN (a), InfoGAN (b), GAN (c) and MYVT ((d) and (e)) trained on noisy training examples of MNIST at noise level $\sigma=0.1$.}
    \label{fig:syntheticimages0.1_mnist}
\end{figure}

\noindent
\textbf{Results}. We consider the synthetic images generated by WGAN, InfoGAN, GAN and MYVT (with $\alpha=10^{-4},10^{-3}$) at noise levels $\sigma=0.5$ and $0.1$. The evaluation is based on a comparison with real and clean images from the datasets and is performed by the pretrained InceptionV3 image classification model provided by the Torch framework.

Table \ref{tab:fidandis1} and Table \ref{tab:fidandis2} present the computed FID and IS for the synthetic images generated by models trained on MNIST, FashionMNIST and Caltech101Silhouettes. Lower FID values indicate better performance while higher IS values are preferred. Notably, MYVT consistently outperforms WGAN, InfoGAN and GAN models significantly in terms of both FID and IS across all datasets, noise levels, and $\alpha$ values. For instance, at $\sigma=0.5$, MYVT achieves the best FID values of 0.12, 0.72, 0.09, and the best IS values of  2.15, 3.01, 2.46 on MNIST, FashionMNIST, Caltech101Silhouettes, respectively. Similarly, at $\sigma=0.1$, MYVT attains the best FID values of 0.15, 0.39 and 0.07, and the best IS values of 2.68, 3.41, 2.45 on these datasets.

Figure \ref{fig:syntheticimages0.5_mnist} and Figure \ref{fig:syntheticimages0.1_mnist} visually illustrate the quality of synthetic images generated by WGAN, InfoGAN, GAN and MYVT trained on noisy images of MNIST at noise levels $\sigma=0.5$ and $0.1$, respectively. Similar visualizations for FashionMNIST and Caltech101Silhouettes can be found in Appendix \ref{syntheticimage}. The comparison highlights that MYVT effectively learns to generate clean synthetic images, outperforming other models, including WGAN, InfoGAN and GAN, which replicate images faithfully in presence of noisy training images. These results provide strong evidence for the efficacy of regularization and our proposed method, MYVT.

In terms of computational time, we compare MYVT to GAN. We observe that MYVT is almost twice as computationally expensive as GAN due to the additional overhead associated with evaluating the proximal map of $g$. In our experiments, $g$ is the total variation norm, requiring 20 iterations of ADMM to estimate its proximal map. To provide specifics, the total computational times required for GAN to train $10^{5}$ iterations on MNIST, FashionMNIST, and Caltech101Silhouettes are around 2.5, 2.3 and 0.25 hours, respectively. Conversely, the total computational times required for MYVT to train $10^{5}$ iterations on MNIST, FashionMNIST and Caltech101Silhouettes are around 6, 5 and 0.4 hours, respectively. These indicate that the complexity of MYVT partially contingent on the computation of the proximal maps.


\begin{table}[]
    \centering
    \caption{The Frechet Inception Distance (FID) and Inception Score (IS) for evaluating synthetic images generated by WGAN \cite{arjovsky2017wasserstein}, InfoGAN \cite{10.5555/3157096.3157340}, GAN \cite{goodfellow2020generative} and MYVT ($\alpha={10^{-3}, 10^{-4}}$) at noise level $\sigma = 0.5$. For FID, the lower the better, and for IS, the higher the better. The best scores are shown in bold.}
        \begin{tabular}{l c c c c c c}
        \hline 
        & \multicolumn{6}{c}{Datasets}\\
        \cline{2-7}
        \multirow{2}{*}{Models}& \multicolumn{2}{c}{MNIST} & \multicolumn{2}{c}{FashionMNIST} & \multicolumn{2}{c}{Caltech101Silhouettes}\\
        \cline{2-7}
        & FID & IS & FID & IS & FID & IS\\
        \hline
        WGAN \cite{arjovsky2017wasserstein}  & 9.74 & 1.23 & 4.72 & 2.54 & 2.46 & 1.87\\
        InfoGAN \cite{10.5555/3157096.3157340}  & 4.18 & 2.03 & 3.94 & 2.85 & 3.61 & 2.12\\
        GAN \cite{goodfellow2020generative}  & 10.12 & 1.09 & 4.51 & 2.81 & 3.75 & 2.06\\
        MYVT ($\alpha=10^{-3}$) & 0.15 & \textbf{2.15} & \textbf{0.72} & \textbf{3.01} & \textbf{0.09} & \textbf{2.46}\\
        MYVT ($\alpha=10^{-4}$) & \textbf{0.12} & 2.08 & 0.97 & 2.91 & 0.11 & 2.38\\
    \end{tabular}
    \label{tab:fidandis1}
\end{table}

\begin{table}[]
    \centering
    \caption{The FID and IS for evaluating synthetic images generated by WGAN \cite{arjovsky2017wasserstein}, InfoGAN \cite{10.5555/3157096.3157340}, GAN \cite{goodfellow2020generative} and MYVT ($\alpha={10^{-3}, 10^{-4}}$) at noise level $\sigma = 0.1$. For FID, the lower the better, and for IS, the higher the better. The best scores are shown in bold.}
        \begin{tabular}{l c c c c c c}
        \hline 
        & \multicolumn{6}{c}{Datasets}\\
        \cline{2-7}
        \multirow{2}{*}{Models}& \multicolumn{2}{c}{MNIST} & \multicolumn{2}{c}{FashionMNIST} & \multicolumn{2}{c}{Caltech101Silhouettes}\\
        \cline{2-7}
        & FID & IS & FID & IS & FID & IS\\
        \hline
        WGAN \cite{arjovsky2017wasserstein}  & 3.59 & 2.13 & 1.49 & 3.34 & 1.46 & 1.99\\
        InfoGAN \cite{10.5555/3157096.3157340}  & 2.29 & 2.54 & 1.45 & 3.29 & 0.65 & 2.17\\
        GAN \cite{goodfellow2020generative}  & 2.19 & 2.37 & 1.61 & 3.25 & 1.37 & 1.85\\
        MYVT ($\alpha=10^{-3}$) & \textbf{0.15} & 2.15 & \textbf{0.39} & 3.38 & \textbf{0.07} & 2.39\\
        MYVT ($\alpha=10^{-4}$) & 0.17 & \textbf{2.68} & 0.62 & \textbf{3.41} & \textbf{0.07} & \textbf{2.45}\\
    \end{tabular}
    \label{tab:fidandis2}
\end{table}

\section{Conclusion}
We have addressed the regularized distributional optimization problem with a composite objective composed of two functionals. The first one has the variational representation while the second one is expressed in terms of the expectation operator of a non-smooth convex regularizer function. We have introduced MYVT as a solution to this problem. Its key idea is to approximate the original problem using Moreau-Yoshida approximation and reformulate it as a concave-convex saddle point problem by leveraging the variational representation. In our future work we aim to develop more efficient algorithms for estimating the solutions of problem (\ref{eqn:varform}). Additionally, we plan to extend MYVT to handle other forms of objective functionals, which do not possess the variational representation. By exploring these directions, we aim to enhance the versatility and efficiency of MYVT and further advance the field of regularized distributional optimization.

\section{Ethics declarations}
\textbf{Fundings} Not applicable\\
\textbf{Conflict of interest} Not applicable\\
\textbf{Ethical approval} Not applicable\\
\textbf{Consent to participate} Not applicable\\
\textbf{Consent for publication} Not applicable\\
\textbf{Availability of data and material} Not applicable\\
\textbf{Code availability} Codes for experiments can be accessed through \url{https://github.com/haidnguyen0909/MYVT} after the acceptance of the paper.\\
\textbf{Authors' contributions} Not applicable\\
\bibliographystyle{plainnat}
\bibliography{mlj}  





\newpage
\appendix
\section{Appendix}
\subsection{Proof of Theorem \ref{thm:twosolutions}}.
\label{appendix:twosolutions}
\begin{proof}
By the definitions of the functionals $G(q)$ and $G^{\lambda}(q)$, we have:
\begin{equation}
    \label{appendix:eqn1}
    G^{\lambda}(\pi)-G(\pi)= \alpha\mathbb{E}_{\textbf{x}\sim\pi}[g^{\lambda}(\textbf{x})-g(\textbf{x})].
\end{equation}
We can decompose the left-hand side of (\ref{appendix:eqn1}) into three terms as follows:
\begin{align}
\begin{split}
    \label{appendix:eqn2}
    G^{\lambda}(\pi)-G(\pi)&= \underbrace{G^{\lambda}(\pi)-G^{\lambda}(\pi^{\lambda})}_{(a)} + \underbrace{G^{\lambda}(\pi^{\lambda})-G(\pi^{\lambda})}_{(b)}
    + \underbrace{G(\pi^{\lambda})  - G(\pi)}_{(c)}.
\end{split}
\end{align}
As $F(q)$ is geodesically $\mu$-strongly convex ($\mu$>0), it is easy to verify that $G(q)$ and $G^{\lambda}(q)$ are also geodesically $\mu$-strongly convex. Therefore, we obtain the following inequalities:
\begin{align}
\label{ineq:a}
\begin{split}
    \text{(a)} &\geq \langle \texttt{grad}G^{\lambda}(\pi^{\lambda}), \texttt{Exp}_{\pi^{\lambda}}^{-1}(\pi)\rangle_{\pi^{\lambda}} + \frac{\mu}{2}\mathcal{W}^{2}_{2}(\pi^{\lambda},\pi)
    = \frac{\mu}{2}\mathcal{W}^{2}_{2}(\pi^{\lambda},\pi),
\end{split}
\end{align}
\noindent
\begin{align}
\label{ineq:c}
\begin{split}
    \text{(c)} &\geq \langle \texttt{grad}G(\pi), \texttt{Exp}_{\pi}^{-1}(\pi^{\lambda})\rangle_{\pi} + \frac{\mu}{2}\mathcal{W}^{2}_{2}(\pi^{\lambda},\pi)
    = \frac{\mu}{2}\mathcal{W}^{2}_{2}(\pi^{\lambda},\pi).
\end{split}
\end{align}
These inequalities are obtained by the assumption that $\pi$ and $\pi^{\lambda}$ are the minimizers of $G(q)$ and $G^{\lambda}(q)$, respectively. Again, by the definition of $G(q)$ and $G^{\lambda}(q)$, we have:
\begin{equation}
\label{ineq:b}
    \text{(b)}= \alpha\mathbb{E}_{\textbf{x}\sim\pi^{\lambda}}[g^{\lambda}(\textbf{x})-g(\textbf{x})].
\end{equation}
Combining (\ref{ineq:a}), (\ref{ineq:b}), (\ref{ineq:c}) and (\ref{appendix:eqn1}), we have:
\begin{equation*}
\label{ineq:e}
    \alpha\mathbb{E}_{\textbf{x}\sim\pi}[g^{\lambda}(\textbf{x})-g(\textbf{x})] - \alpha\mathbb{E}_{\textbf{x}\sim\pi^{\lambda}}[g^{\lambda}(\textbf{x})-g(\textbf{x})]\geq \mu \mathcal{W}^{2}_{2}(\pi^{\lambda},\pi),
\end{equation*}
which induces the following inequality by using the fact that $g(\textbf{x}) \geq g^{\lambda}(\textbf{x})$ for all $\textbf{x}\in\mathbb{R}^{d}$:
\begin{equation}
\label{ineq:w2}
     \frac{\alpha}{\mu}\mathbb{E}_{\textbf{x}\sim\pi^{\lambda}}[g(\textbf{x}) - g^{\lambda}(\textbf{x})]\geq  \mathcal{W}^{2}_{2}(\pi^{\lambda},\pi).
\end{equation}
Most importantly, it is known that $g^{\lambda}(\textbf{x})$ converges pointwise to $g(\textbf{x})$ as $\lambda$ tends to zero \cite{rockafellar2009variational}. Therefore, using (\ref{ineq:w2}), $\mathcal{W}^{2}_{2}(\pi^{\lambda},\pi)$ tends to zero as $\lambda$ tends to zero.

We now prove (\ref{eqn:boundW2}). Given that $g$ is Lipschitz, by definition of $g^{\lambda}$ (\ref{def:envelope}), we have: for all $\textbf{x}\in \mathbb{R}^{d}$

\begin{align}
\label{ineqn:lipschitz}
\begin{split}
    g(\textbf{x}) - g^{\lambda}(\textbf{x}) =& g(\textbf{x}) - \inf_{\textbf{y}\in \mathbb{R}^{d}}\left\{ g(\textbf{y})+\frac{1}{2\lambda}\lVert \textbf{x}-\textbf{y} \rVert^{2}\right\}= \sup_{\textbf{y}\in \mathbb{R}^{d}}\left\{ g(\textbf{x}) - g(\textbf{y})-\frac{1}{2\lambda}\lVert \textbf{x}-\textbf{y} \rVert^{2}\right\}\\
    \leq & \sup_{\textbf{y}\in \mathbb{R}^{d}}\left\{ \lVert g \rVert_{\text{Lip}}\lVert \textbf{x}-\textbf{y}  \rVert-\frac{1}{2\lambda}\lVert \textbf{x}-\textbf{y} \rVert^{2}\right\} = \frac{\lambda }{2}\lVert g \rVert_{\text{Lip}}^{2}.
\end{split}
\end{align}
We conclude (\ref{eqn:boundW2}) by applying (\ref{ineqn:lipschitz}) to (\ref{ineq:w2}).

\end{proof}
\subsection{Proof of Theorem \ref{thm:saddle}}
\label{appendix:saddle}
\begin{proof}
In the same vein as VT \cite{liu2021infinite} and mirrorVT \cite{nguyen2023mirror}, our method is based on the assumption that $F$ admits the variational representation (\ref{eqn:varform}). Thus we replace $F(q)$ with its variational representation in problem (\ref{eqn:smoothregDP}), leading to the following concave-convex saddle point problem:

\begin{align}
\label{eqn:saddle}
    \min_{ q\in \mathcal{P}_{2}(\mathcal{X})} \max_{h\in\mathcal{H}}\left\{ \mathbb{E}_{\textbf{x}\sim q}\left[ h(\textbf{x})\right]-F^{*}(h) + \alpha \mathbb{E}_{\textbf{x}\sim q}\left[g^{\lambda}(\textbf{x}) \right]\right\}.
\end{align}

It is easy to verify that problem (\ref{eqn:saddle}) is concave-convex, so that the strong duality holds, which indicates that problem (\ref{eqn:saddle}) is equivalent to the following one:
\begin{align*}
\label{eqn:saddle2}
    \max_{h\in\mathcal{H}}\min_{ q\in \mathcal{P}_{2}(\mathcal{X})} \left\{ \mathbb{E}_{\textbf{x}\sim q}\left[ h(\textbf{x})+\alpha g^{\lambda}(\textbf{x}) \right]-F^{*}(h) \right\}
\end{align*}
It is noted that the inner minimization over the probability distribution space is computationally intractable. Consequently, we define $\mathcal{F}$ as the space of transport mapping functions that transform the simple distribution $p_{\epsilon}$ into $q\in \mathcal{P}_{2}(\mathcal{X})$. We further constrain $\mathcal{F}$ to be a subset of the square-integrable functions such that the minimization over $\mathcal{F}$ can be solved numerically, such as a class of deep neural networks $V(\cdot, \theta)$, parameterized by $\theta$. Let $V(\cdot, \theta^{*})$ represent the mapping function that transforms $p_{\epsilon}$ into $\pi^{\lambda}$, the optimal solution of problem (\ref{eqn:smoothregDP}). Assuming that $V(\cdot, \theta^{*})\in \mathcal{F}$, we obtain the following optimization problem through reparameterization:
\begin{align*}
    \max_{h\in\mathcal{H}}\min_{V(\cdot, \theta)\in \mathcal{F} } \left\{ \mathbb{E}_{\epsilon\sim p_{\epsilon}}\left[ h(V(\epsilon, \theta))+\alpha g^{\lambda}(V(\epsilon, \theta)) \right]-F^{*}(h) \right\}.
\end{align*}
Furthermore, since $h(\textbf{x})$ and $g^{\lambda}(\textbf{x})$ are continuous, we can apply the interchangeable principle from \cite{dai2017learning,rockafellar2009variational} to obtain the following equality:
\begin{align*}
    \min_{V(\cdot, \theta)\in \mathcal{F} } \left\{ \mathbb{E}_{\epsilon\sim p_{\epsilon}}\left[ h(V(\epsilon, \theta))+\alpha g^{\lambda}(V(\epsilon, \theta)) \right]\right\}=
    \mathbb{E}_{\epsilon\sim p_{\epsilon}}\left[ \min_{\textbf{x}_{\epsilon}\in\mathcal{X}} \left\{ h(\textbf{x}_{\epsilon})+ \alpha g^{\lambda}(\textbf{x}_{\epsilon}) \right\}  \right], 
\end{align*}
which concludes the proof.
\end{proof}

\subsection{Proof of Theorem \ref{thm:gradflow}}
\label{appendix:flowsolution}
We need the following lemma, known as the gradient dominance of the geodesically strongly convex functional, in order to show the convergence of $G^{\lambda}(q)$.
\begin{lem}
    \label{lem:gradientdominance}
    (Gradient Dominance) Let $F$ be a geodesically $\mu$-strongly convex and $F_{\texttt{opt}}=\min_{q\in\mathcal{P}_{2}(\mathcal{X})}F(q)$, we have:
    \begin{align*}
        2\mu \left[ F(q)-F_{\texttt{opt}}\right]\leq \langle \texttt{grad}F(q), \texttt{grad}F(q) \rangle_{q}.
    \end{align*}
\end{lem}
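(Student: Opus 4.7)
The plan is to mirror the standard Euclidean argument for the Polyak--Łojasiewicz inequality, but carried out in the tangent/inner-product structure on the Wasserstein manifold $(\mathcal{P}_2(\mathcal{X}),\mathcal{W}_2)$ introduced in the preliminaries. Concretely, I want to apply the geodesic strong convexity inequality from Definition \ref{def:stronglyconvex} with $p=q$ and $p'=\pi$, where $\pi$ is (any) minimizer of $F$, and then bound the resulting tangent-vector inner product using Cauchy--Schwarz.

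First, I would substitute $p'=\pi$ into the strong-convexity inequality, writing
\begin{equation*}
F_{\texttt{opt}}=F(\pi)\;\geq\;F(q)+\langle \texttt{grad}F(q),\,\texttt{Exp}_q^{-1}(\pi)\rangle_q+\frac{\mu}{2}\,\mathcal{W}_2^2(\pi,q).
\end{equation*}
Rearranging gives
\begin{equation*}
F(q)-F_{\texttt{opt}}\;\leq\;-\langle \texttt{grad}F(q),\,\texttt{Exp}_q^{-1}(\pi)\rangle_q-\frac{\mu}{2}\,\mathcal{W}_2^2(\pi,q).
\end{equation*}

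Second, I would use the compatibility between the Riemannian inner product at $q$ and the Wasserstein distance, namely that the norm induced by $\langle\cdot,\cdot\rangle_q$ on tangent vectors satisfies $\lVert \texttt{Exp}_q^{-1}(\pi)\rVert_q=\mathcal{W}_2(\pi,q)$ (this is the defining geometric property recalled just before Definition \ref{def:firstvariational}). Applying Cauchy--Schwarz to the inner product term and setting $a=\lVert \texttt{grad}F(q)\rVert_q$ and $b=\mathcal{W}_2(\pi,q)$, the bound becomes
\begin{equation*}
F(q)-F_{\texttt{opt}}\;\leq\;a\,b-\frac{\mu}{2}b^2.
\end{equation*}
The right-hand side, viewed as a quadratic in $b$, is maximized at $b=a/\mu$, giving the value $a^2/(2\mu)$. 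Hence $F(q)-F_{\texttt{opt}}\leq \tfrac{1}{2\mu}\,\langle \texttt{grad}F(q),\texttt{grad}F(q)\rangle_q$, which is the claim after multiplying by $2\mu$.

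The main obstacle, and the place I would be most careful, is the geometric identification $\lVert\texttt{Exp}_q^{-1}(\pi)\rVert_q=\mathcal{W}_2(\pi,q)$ and, more subtly, whether the inverse exponential map is well-defined at $\pi$. If one wanted to avoid invoking $\texttt{Exp}_q^{-1}$ directly, the alternative route is to parametrize displacement geodesics by optimal transport plans: for any $q'$, write the geodesic from $q$ to $q'$ and use the initial velocity field $v$ (with $\lVert v\rVert_q=\mathcal{W}_2(q,q')$), so that strong convexity reads $F(q')\geq F(q)+\langle \texttt{grad}F(q),v\rangle_q+\tfrac{\mu}{2}\lVert v\rVert_q^2$; minimizing the right-hand side over $q'$ (equivalently, over initial velocities) yields the same bound $F_{\texttt{opt}}\geq F(q)-\tfrac{1}{2\mu}\lVert \texttt{grad}F(q)\rVert_q^2$. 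Either formulation leads to the desired gradient-dominance inequality, and I expect the Cauchy--Schwarz route to be the cleanest to present.
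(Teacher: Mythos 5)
Your argument is correct and follows essentially the same route as the paper's: both apply the geodesic strong-convexity inequality at the minimizer, identify $\mathcal{W}_2^2(q,\pi)$ with $\langle \texttt{Exp}_q^{-1}(\pi),\texttt{Exp}_q^{-1}(\pi)\rangle_q$, and then extract the bound $-\tfrac{1}{2\mu}\lVert\texttt{grad}F(q)\rVert_q^2$ — the paper by completing the square, you by Cauchy--Schwarz followed by optimizing the quadratic in $b$, which is the same computation. The only substantive difference is that you explicitly flag the well-definedness of $\texttt{Exp}_q^{-1}$ and offer the velocity-field reformulation as a fallback, a point the paper passes over silently.
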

\begin{proof}
    Let $q_{\texttt{opt}}=\argmin_{q\in\mathcal{P}_{2}(\mathcal{X})}F(q)$, by the definition of geodesically strong convexity, we have:
    \begin{align*}
        &F(q_{\texttt{opt}})\geq F(q)+\langle \texttt{grad}F(q), \texttt{Exp}_{q}^{-1}(q_{\texttt{opt}})\rangle_{q} + \frac{\mu}{2}\mathcal{W}^{2}_{2}(q,q_{\texttt{opt}})\\
        & =F(q)+\langle \texttt{grad}F(q), \texttt{Exp}_{q}^{-1}(q_{\texttt{opt}})\rangle_{q} 
        + \frac{\mu}{2}\langle \texttt{Exp}_{q}^{-1}(q_{\texttt{opt}}), \texttt{Exp}_{q}^{-1}(q_{\texttt{opt}})\rangle_{q},
    \end{align*}
    where the equality is obtained by the definition of 2-Wasserstein distance: $\mathcal{W}^{2}_{2}(q,q_{\texttt{opt}})=\langle \texttt{Exp}_{q}^{-1}(q_{\texttt{opt}}), \texttt{Exp}_{q}^{-1}(q_{\texttt{opt}})\rangle_{q}$. We continue as follows:
    \begin{align*}
        F(q_{\texttt{opt}})\geq & F(q)+\frac{\mu}{2}\langle \texttt{Exp}_{q}^{-1}(q_{\texttt{opt}}) +\frac{1}{\mu}\texttt{grad}F(q), \texttt{Exp}_{q}^{-1}(q_{\texttt{opt}}) 
        +\frac{1}{\mu}\texttt{grad}F(q)\rangle_{q} - \frac{1}{2\mu}\langle \texttt{grad}F(q), \texttt{grad}F(q) \rangle_{q}\\
        \geq & F(q) - \frac{1}{2\mu}\langle \texttt{grad}F(q), \texttt{grad}F(q) \rangle_{q}.
    \end{align*}
    The lemma is proved.
\end{proof}

Now we prove Theorem \ref{thm:gradflow}. 
\begin{proof}
We consider the following gradient flow:
\begin{equation*}
    \frac{\partial \textbf{x}_{t}}{\partial t}=-v_{t}(\textbf{x}_{t})
\end{equation*}
and assume that $\textbf{x}_{0}$ is random with density $q_{0}$. As $q_{t}$ is the density of $\textbf{x}_{t}$, by directly applying the Fokker-Planck equation, we obtain:
\begin{align*}
    \frac{\partial q_{t}(\textbf{x})}{\partial t}= &\texttt{div}\left(  q_{t}(\textbf{x}) v_{t}(\textbf{x})\right)\\
    =& \texttt{div}\left(  q_{t}(\textbf{x})  \left[ \nabla h^{*}_{t}(\textbf{x})+\frac{\alpha}{\lambda}\left( \textbf{x}- \texttt{prox}^{\lambda}_{g}(\textbf{x})\right) \right]\right)\\
    = &\texttt{div}\left( q_{t}(\textbf{x}) \nabla \left[\frac{\partial F}{\partial q_{t}}(\textbf{x}) + \frac{\partial \mathbb{E}_{q_{t}}\left[g^{\lambda}\right]}{\partial q_{t}}(\textbf{x})\right]  \right)
    =\texttt{div}\left(q_{t}(\textbf{x}) \nabla \frac{\partial G^{\lambda}}{\partial q_{t}}(\textbf{x}) \right),
\end{align*}
where the second equality is obtained by the definition of $v_{t}$, the third equality is obtained by the facts $h_{t}^{*}=\partial F / \partial q_{t}$ and $g^{\lambda}=\partial \mathbb{E}_{q_{t}}\left[ g^{\lambda}\right] / \partial q_{t}$. The last equation shows that the continuity equation (\ref{eqn:continuityequation}) is the Wasserstein gradient flow of $G^{\lambda}(q)$ in the space of probability distributions with 2-Wasserstein metric. Lastly, we can easily show that:

\begin{align*}
\frac{\mathrm{d}}{\mathrm{d}t}\left(G^{\lambda}(q_{t})-G^{\lambda}(\pi^{\lambda}) \right)= &\int_{\mathcal{X}} \frac{\partial G^{\lambda}}{\partial q}(q_{t})(\textbf{x})\frac{\partial q_{t}}{\partial t}(\textbf{x})\mathrm{d}\textbf{x}\\
= &\int_{\mathcal{X}} \frac{\partial G^{\lambda}}{\partial q_{t}}(q_{t})(\textbf{x})\texttt{div}\left(q_{t}(\textbf{x})\nabla \frac{\partial G^{\lambda}}{\partial q}(q_{t})(\textbf{x})\right)\mathrm{d}\textbf{x}\\
= & - \int_{\mathcal{X}} \langle \nabla \frac{\partial G^{\lambda}}{\partial q}(q_{t})(\textbf{x}),\nabla \frac{\partial G^{\lambda}}{\partial q}(q_{t})(\textbf{x})\rangle q_{t}(\textbf{x})\mathrm{d}(\textbf{x})
=- \langle \texttt{grad}G^{\lambda}(q_{t}), \texttt{grad}G^{\lambda}(q_{t})  \rangle_{q_{t}},
\end{align*}
where the last equality is obtained by applying integration by parts. Since $G^{\lambda}$ is composed of two geodesically strongly convex functionals $F$ and $\mathbb{E}_{q}\left[g^{\lambda}\right]$, it is easy to verify that $G^{\lambda}$ is geodesically $\mu$-strongly convex. Using Lemma \ref{lem:gradientdominance}, we have:
\begin{align*}
    \frac{\mathrm{d}}{\mathrm{d}t}\left(G^{\lambda}(q_{t})-G^{\lambda}(\pi^{\lambda}) \right) \leq -2\mu \left( G^{\lambda}(q_{t})-G^{\lambda}(\pi^{\lambda}) \right).
\end{align*}
From a straight forward application of the Gronwall's inequality \cite{gronwall1919note}, it follows that:
\begin{align*}
    G^{\lambda}(q_{t})-G^{\lambda}(\pi^{\lambda})\leq 
    \exp(-2\mu t)( G^{\lambda}(q_{0})-G^{\lambda}(\pi^{\lambda})),
\end{align*}
which concludes the proof.
\end{proof}

\subsection{Additional Experiments on Synthetic Data}
\label{appendix:syntheticexperiments}
\noindent
\textbf{Experiment settings}. Similar to the experiments for the KL divergence, we compare MYVT and VT in terms of MSE and average $l_{1}$-norm for the first case study, and MSE and average TV semi-norm for the second case study. We set $\alpha=0.01$ and $\alpha=0.1$ for MYVT in the first and second case studies, respectively. We parameterize the function $V$ with a neural network with four layers, each of which has 100 neurons. For the function $h^\prime$, we use another neural network with two layers, each of which has 100 neurons, to parameterize it. To guarantee the outputs of $h^\prime$ to be in $(0,1)$, we use the sigmoid activation function in the last layer. The step sizes for VT and MYVT are set to 0.01 and 0.001, respectively. We run $K=2000$ iterations for the first case study and $K=4000$ iterations for the second case study.\\

\noindent
\textbf{Results}. The results for the first case study are illustrated in Figure \ref{fig:sparsity_JS}. Both methods MYVT and VT are able to generate samples that are similar to the truth, as indicated by the decreasing MSE over 2000 iterations (see Figure \ref{fig:sparsity_JS}a). However, MYVT keeps the average $l_{1}$-norm much lower than that of VT over iterations (see Figure \ref{fig:sparsity_JS}b). In particular, MYVT consistently produce samples with much lower average $l_{1}$-norm (around 16.35) compared to VT (around 31.81). Visually, samples generated by MYVT appear much sparser than those generated by VT (see Figure \ref{fig:sparsity_JS}c and \ref{fig:sparsity_JS}d). 

\begin{figure*}
    \centering
    \includegraphics[width=1.0\textwidth]{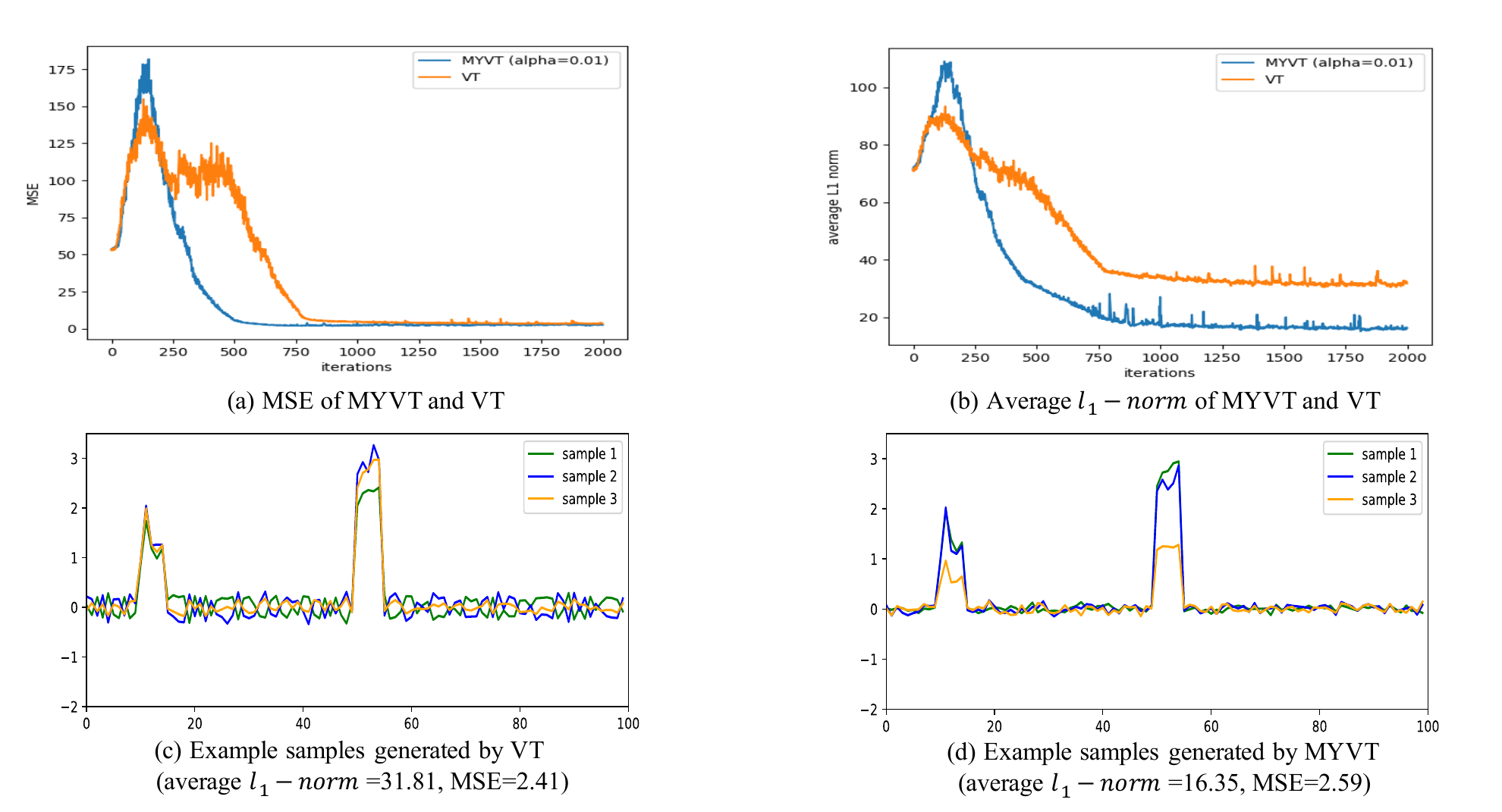}
    \caption{Comparison of MYVT($\alpha=0.01$) and VT in terms of MSE and sparsity (average $l_{1}$-norm of generated samples), when $F$ is JS divergence. (a) MSE of MYVT and VT over 2000 iterations, (b) average $l_{1}$-norm over 2000 iterations, (c) three example samples generated by VT, (d) three example samples generated by MYVT.}
    \label{fig:sparsity_JS}
\end{figure*}

\begin{figure*}
    \centering
    \includegraphics[width=1.0\textwidth]{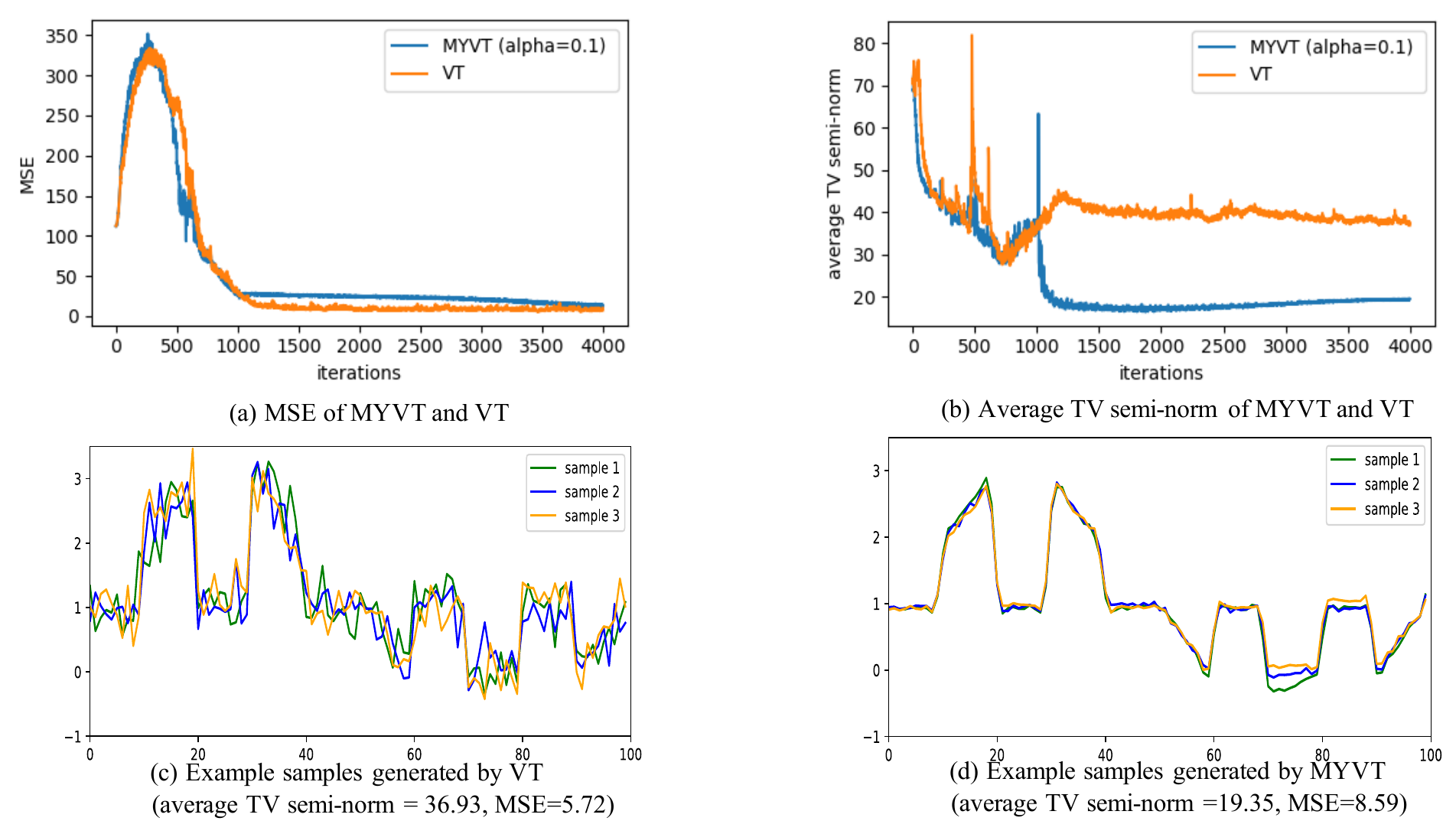}
    \caption{Comparison of MYVT($\alpha=0.1$) and VT in terms of MSE and sparsity (average $l_{1}$-norm of generated samples), when $F$ is JS divergence. (a) MSE of MYVT and VT over 4000 iterations, (b) average $l_{1}$-norm over 4000 iterations, (c) three example samples generated by VT, (d) three example samples generated by MYVT.}
    \label{fig:TV_JS}
\end{figure*}

The results for the second case study are shown in Figure \ref{fig:TV_JS}. The methods are again able to produce samples that are closely resemble the truth, as evidenced by the significant decrease in MSE values over 4000 iterations (see Figure \ref{fig:TV_JS}a). However, MYVT produces samples with a lower average TV semi-norm due to the effect of the TV regularization. Figures \ref{fig:TV_JS}c and \ref{fig:TV_JS}d show example samples generated by VT and MYVT, respectively. The average TV semi-norm of samples generated by MYVT (19.35) is significantly lower than that of samples generated by VT (36.93). These results in both case studies confirm the regularization effects of problem (\ref{eqn:regDP}) on the generated samples and effectiveness of MYVT in the case of JS divergence.

\label{syntheticimage}
\begin{figure}
\centering
    \begin{subfigure}{.25\textwidth}
        \center
        \includegraphics[width=1.0\linewidth]{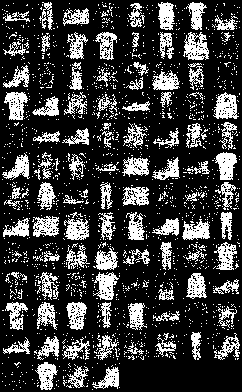}
        \caption{WGAN}
        \label{subfig:11}
    \end{subfigure}
     \begin{subfigure}{.25\textwidth}
        \center
        \includegraphics[width=1.0\linewidth]{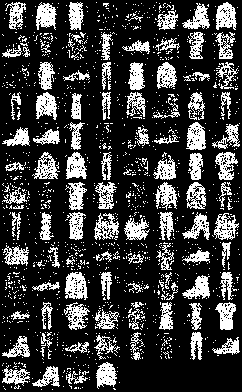}
        \caption{InfoGAN}
        \label{subfig:12}
    \end{subfigure}
    \begin{subfigure}{.25\textwidth}
        \center
        \includegraphics[width=1.0\linewidth]{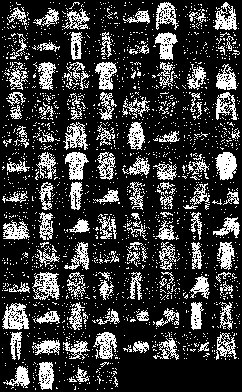}
        \caption{GAN}
        \label{subfig:13}
    \end{subfigure}

    \centering
    \begin{subfigure}{.25\textwidth}
        \centering
        \includegraphics[width=1.0\linewidth]{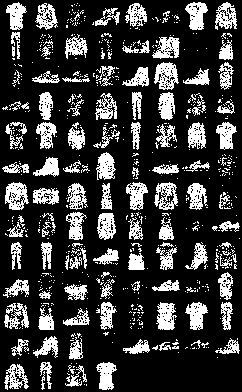}
        \caption{MYVT ($\alpha=10^{-4}$)}
        \label{subfig:14}
    \end{subfigure}
    \begin{subfigure}{.25\textwidth}
        \center
        \includegraphics[width=1.0\linewidth]{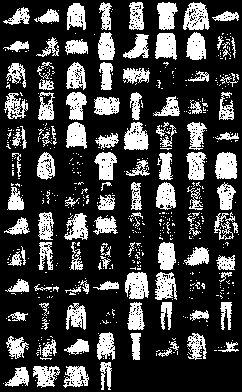}
        \caption{MYVT ($\alpha=10^{-3}$)}
        \label{subfig:15}
    \end{subfigure}

    \caption{Synthetic images generated by WGAN (a), InfoGAN (b), GAN (c) and MYVT ((d) and (e)) trained on noisy training examples of FashionMNIST at noise level $\sigma=0.5$.}
    \label{fig:syntheticimages0.5_fashion}
\end{figure}

\subsection{Additional Experiments on Real-world Datasets}
Figure \ref{fig:syntheticimages0.5_fashion} and Figure \ref{fig:syntheticimages0.1_fashion} showcase the quality of synthetic images generated by WGAN, InfoGAN, GAN and MYVT trained on noisy images of FashionMNIST at noise levels $\sigma=0.5$ and $0.1$, respectively. Similarly, Figure \ref{fig:syntheticimages0.5_caltech} and Figure \ref{fig:syntheticimages0.1_caltech} demonstrates the quality of synthetic images generated by the models trained on noisy images of Caltech101Silhouettes at the noise levels. The visualizations reveal that WGAN, InfoGAN and GAN struggle to generate clean images due to the presence of noisy training images, resulting in images with artifacts. In contrast, MYVT can overcome this problem thanks to the regularization part in the objective function. These results reinforces the effectiveness of the proposed method, MYVT.

\begin{figure}
\centering
    \begin{subfigure}{.25\textwidth}
        \center
        \includegraphics[width=1.0\linewidth]{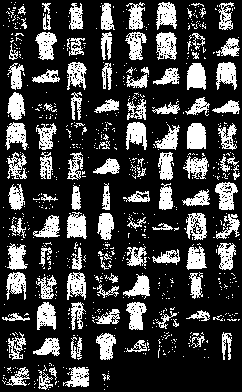}
        \caption{WGAN}
        \label{subfig:16}
    \end{subfigure}
    \begin{subfigure}{.25\textwidth}
        \center
        \includegraphics[width=1.0\linewidth]{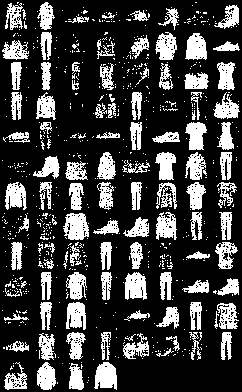}
        \caption{InfoGAN}
        \label{subfig:17}
    \end{subfigure}
    \begin{subfigure}{.25\textwidth}
        \center
        \includegraphics[width=1.0\linewidth]{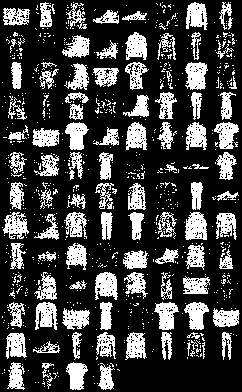}
        \caption{GAN}
        \label{subfig:18}
    \end{subfigure}

    \centering
    \begin{subfigure}{.25\textwidth}
        \centering
        \includegraphics[width=1.0\linewidth]{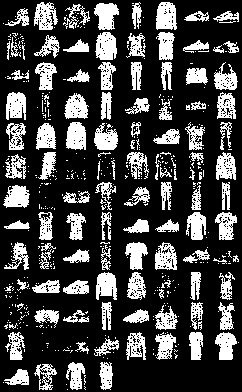}
        \caption{MYVT \\($\alpha=10^{-4}$)}
        \label{subfig:19}
    \end{subfigure}
    \begin{subfigure}{.25\textwidth}
        \center
        \includegraphics[width=1.0\linewidth]{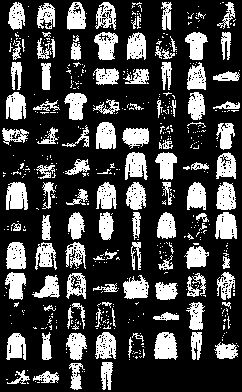}
        \caption{MYVT \\($\alpha=10^{-3}$)}
        \label{subfig:20}
    \end{subfigure}

    \caption{Synthetic images generated by WGAN (a), InfoGAN (b), GAN (c) and MYVT ((d) and (e)) trained on noisy training examples of FashionMNIST at noise level $\sigma=0.1$.}
    \label{fig:syntheticimages0.1_fashion}
\end{figure}

\begin{figure}
\centering
    \begin{subfigure}{.25\textwidth}
        \center
        \includegraphics[width=1.0\linewidth]{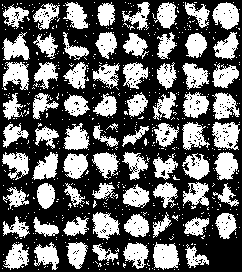}
        \caption{WGAN}
        \label{subfig:21}
    \end{subfigure}
    \begin{subfigure}{.25\textwidth}
        \center
        \includegraphics[width=1.0\linewidth]{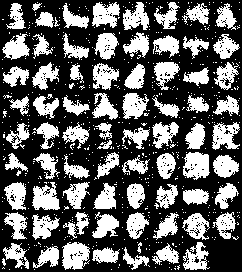}
        \caption{InfoGAN}
        \label{subfig:22}
    \end{subfigure}
    \begin{subfigure}{.25\textwidth}
        \center
        \includegraphics[width=1.0\linewidth]{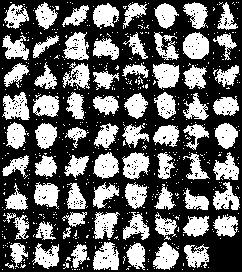}
        \caption{GAN}
        \label{subfig:23}
    \end{subfigure}

    \centering
    \begin{subfigure}{.25\textwidth}
        \centering
        \includegraphics[width=1.0\linewidth]{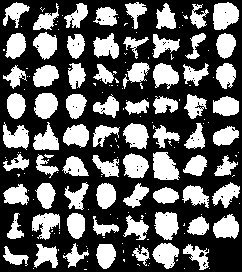}
        \caption{MYVT \\($\alpha=10^{-4}$)}
        \label{subfig:24}
    \end{subfigure}
    \begin{subfigure}{.25\textwidth}
        \center
        \includegraphics[width=1.0\linewidth]{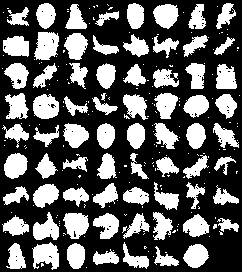}
        \caption{MYVT \\($\alpha=10^{-3}$)}
        \label{subfig:25}
    \end{subfigure}

    \caption{Synthetic images generated by WGAN (a), InfoGAN (b), GAN (c) and MYVT ((d) and (e)) trained on noisy training examples of Caltech101Silhouettes at noise level $\sigma=0.5$.}
    \label{fig:syntheticimages0.5_caltech}
\end{figure}

\begin{figure}
\centering
    \begin{subfigure}{.25\textwidth}
        \center
        \includegraphics[width=1.0\linewidth]{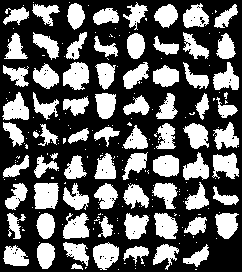}
        \caption{WGAN}
        \label{subfig:26}
    \end{subfigure}
     \begin{subfigure}{.25\textwidth}
        \center
        \includegraphics[width=1.0\linewidth]{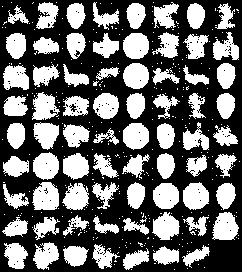}
        \caption{InfoGAN}
        \label{subfig:27}
    \end{subfigure}
    \begin{subfigure}{.25\textwidth}
        \center
        \includegraphics[width=1.0\linewidth]{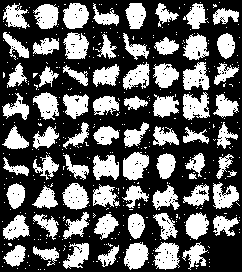}
        \caption{GAN}
        \label{subfig:28}
    \end{subfigure}

    \centering
    \begin{subfigure}{.25\textwidth}
        \centering
        \includegraphics[width=1.0\linewidth]{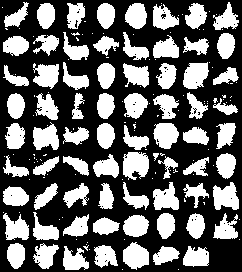}
        \caption{MYVT ($\alpha=0.0001$)}
        \label{subfig:29}
    \end{subfigure}
    \begin{subfigure}{.25\textwidth}
        \center
        \includegraphics[width=1.0\linewidth]{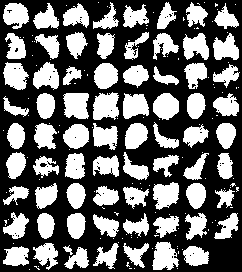}
        \caption{MYVT ($\alpha=0.001$)}
        \label{subfig:30}
    \end{subfigure}

    \caption{Synthetic images generated by WGAN (a), InfoGAN (b), GAN (c) and MYVT ((d) and (e)) trained on noisy training examples of Caltech101Silhouettes at noise level $\sigma=0.1$.}
    \label{fig:syntheticimages0.1_caltech}
\end{figure}
\end{document}